\newtheorem{myProblem}{Problem}
\newtheorem{definition}{Definition}
\newtheorem{lemma}{Lemma}
\newtheorem{proposition}{Proposition}
\newtheorem{proof}{Proof}
\title{Route Optimization via Environment-Aware Deep Network and Reinforcement Learning\thanks{This manuscript has been accepted by \textit{ACM Transactions on Intelligent Systems and Technology} on April 25, 2021.}}
\author{
  Pengzhan Guo \\
  Stony Brook University\\
  Stony Brook, USA, 11794 \\
  \texttt{guopengzhan@hotmail.com} \\
  \And
  Keli Xiao \\
  Stony Brook University\\
  Stony Brook, USA, 11794 \\
  \texttt{keli.xiao@stonybrook.edu} \\
  \And
  Zeyang Ye \\
  Samsung Research America\\
  Mountain View, USA, 94043 \\
  \texttt{zeyang.ye3@gmail.com} \\
  \And
  Wei Zhu \\
  Stony Brook University\\
  Stony Brook, USA, 11794 \\
  \texttt{wei.zhu@stonybrook.edu} \\
}
\begin{document}
\maketitle
\begin{abstract}
Vehicle mobility optimization in urban areas is a long-standing problem in smart city and spatial data analysis.
Given the complex urban scenario and unpredictable social events, our work focuses on developing a mobile sequential recommendation system to maximize the profitability of vehicle service providers (e.g., taxi drivers).
In particular, we treat the dynamic route optimization problem as a long-term sequential decision-making task. 
A reinforcement-learning framework is proposed to tackle this problem, by integrating a self-check mechanism and a deep neural network for customer pick-up point monitoring.
To account for unexpected situations (e.g., the COVID-19 outbreak), our method is designed to be capable of handling related environment changes with a self-adaptive parameter determination mechanism.
Based on the yellow taxi data in New York City and vicinity before and after the COVID-19 outbreak, we have conducted comprehensive experiments to evaluate the effectiveness of our method.
The results show consistently excellent performance, from hourly to weekly measures, to support the superiority of our method over the state-of-the-art methods (i.e., with more than 98$\%$ improvement in terms of the profitability for taxi drivers).
\end{abstract}

\keywords{route recommendation \and route optimization \and deep learning \and reinforcement learning \and COVID-19}

\maketitle
\section{Introduction}
Taxicab service plays an essential and irreplaceable role in urban traffic system \citep{ji2020spatio}.
For example, in New York City, there are more than 21,000 taxi drivers and more than 80,000 ride-sharing drivers.
Compared to other means of daily transportation, such as bus and subway, taxis usually offers a better trip experience in terms of comfort, convenience, and travel time accommodation.
Thus, it has been a long-standing central issue to improve the efficiency of vehicle mobility by optimizing the route recommendation for drivers for taxi services in big cities like New York, Tokyo, and Beijing
 \citep{yuan2011find,zheng2014urban}.


Based on large-scale taxi trace data, there is an extensive literature on route recommendation systems.
Some studies focus on the traditional optimization method.
For example, \citet{qu2014cost} proposed a cost-efficient objective function and developed a greedy method to maximize the potential net profit.
Similar methods can be found in \citep{ding2013hunts,zhou2016method}.
Stochastic optimization methods (e.g., simulated annealing -SA-) and parallel computing techniques have also been applied to route recommendation problems to speed up the route searching tasks (see \citep{ye2018applying,ye2018unified,zhang2019parallel}). 
To avoid identical route recommendation to different drivers, \citet{xiao2020multi} developed a multi-user mobile sequential recommendation model with a designed core rotation and mixing strategy to enhance SA when handling the parallel search for multiple drivers.


On the other hand, some studies focus on machine learning-based approaches for route recommendation problems \citep{wang2017taxirec, hu2019vizml,li2013recommendation}.
For example, \citet{wang2017taxirec} deployed a rank-based extreme learning machine (ELM) model to recommend road clusters to taxi drivers for passenger seeking.
By exploring road clusters through a clustering process based on the middle point of the road segment, their method aims to increase the pick-up probability for drivers.
\citet{garg2018route} implemented a Monte Carlo tree search method to minimize the traveling distance for taxi drivers.
\citet{zi2019anomalous} proposed a cloud-based system and applied machine learning algorithms to help passengers detect abnormal taxi trajectories.
Importantly, \citet{rong2016rich} suggested that drivers' long-term passenger seeking process can be viewed as a Markov Decision Process (MDP).
Considering that reinforcement learning (RL) techniques are powerful in handling MDP \citep{mnih2015human}, RL-based approaches have appeared in some recent studies. 
After introducing a comprehensive process of traffic-related feature extraction, \citet{ji2020spatio} applied the classical deep reinforcement learning method to a dynamic route recommendation system.


However, two outstanding issues in the existing route recommendation literature are still under investigation.
First, although the classic version of RL has been shown to be effective in dynamic route searching, few studies can be found to address the adaptive versions of the RL-based method.
Thus, we believe that RL's performance in route recommendation can still be significantly improved via an adaptive design.
Second, most of the existing methods have not been investigated under a dynamic urban environment scenario (e.g., sudden situation changes due to unexpected public health emergence like the COVID-19 outbreak).
In this case, an effective route recommendation system should be able to detect these abnormal situations and conduct related self-adjustments. 

To address these two issues, we propose an adaptive reinforcement learning method with a self-check mechanism.
Our method will not only accelerate the convergence rate of traditional RL methods, but also handle sudden vehicle demand changes due to unpredictable public emergency.
In summary, our work differs from existing methods and contributes to the literature in three ways.
\begin{itemize}
    \item First, we apply a self-check mechanism to periodically compare the current policy to the preserved ones, supported by theoretical analysis. Using the self-check mechanism, our model can achieve a better performance than the classical reinforcement learning method under the same condition.
    \item Second, we use a deep neural network to enrich the ability of our method in detecting potential situation changes. Upon encountering dynamic environments, our model can automatically adjust its parameters to achieve optimal route recommendation performance. The effectiveness of the parameter updating approach is supported by theoretical analysis.
    \item 
     Finally, our method's objective function is flexibly extensible to deal with different aspects of the path quality (e.g., traveling costs, profitability, etc.). Based on data from New York City, we evaluate our method with a focus on the profitability of recommended routes. The results have validated the superiority of our method over other benchmarks, including the state-of-the-art methods. Compared with existing methods, our method can achieve 98\% or more earnings for taxi drivers.
     Importantly, given the sudden situation change caused by COVID-19, our method leads to consistently superior performance in terms of the hourly and weekly income across different months.
\end{itemize}

The rest of the paper is organized as follows. 
In Section~\ref{sec:formulation}, we propose the dynamic taxi route
recommendation problem after laying out related definitions.
Section~\ref{sec:model} introduces key definitions and concepts particular to our model. 
Section~\ref{sec:method} presents our methodology, including detailed explanation of the sequential method, and theoretical discussions of their effectiveness.
In Section~\ref{sec:result}, we demonstrate and then discuss the results on a large-scale real-world dataset. 
Subsequently, we summarize additional related work in Section~\ref{sec:related work} and finally, we conclude in Section~\ref{sec:conclusion and Future WOrk}.

\section{Problem and Proposed Framework}\label{sec:formulation}
In this section, we introduce important definitions and formalize our problem.
\subsection{Preliminary}
\begin{figure}[!]
\centering
   \subfigure {\includegraphics[width=0.8\linewidth]{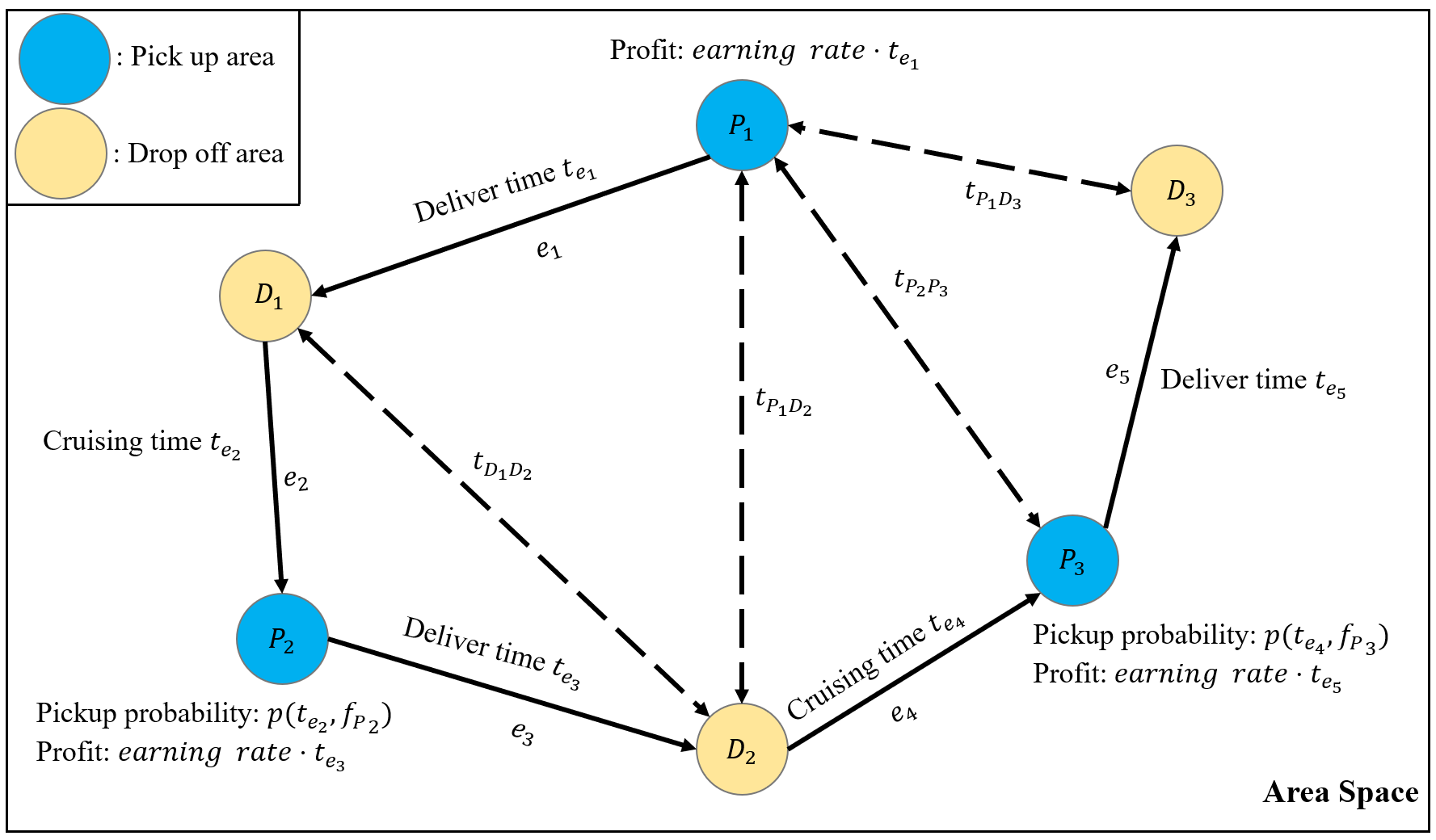}}
   \caption{Road Network.
   }\label{fig:rn}
\end{figure}
\begin{definition}(Road Segment).
A long path can be divided into several road segments by its connections. 
Specifically, each segment is associated with a start point and an end point.
Moreover, if an area is connected to multiple areas, then this area has several road segments.
\end{definition}

\begin{definition}(Route). 
 A recommended route for a taxi driver at a specific time is a sequence of connected road segments, denoted as $r= e_1 \rightarrow  e_2 \rightarrow...\rightarrow  e_k$ where $ e_i \in E$, and $E$ is the set of all the road segments.
\end{definition}

\begin{definition}(Road Network).
Traditionally, a road network is a directed graph.
It is denoted as $G$=<$V,$ $E$>, where $V$ denotes road intersections, and E means road segments.
Since we focus on area recommendation, in our settings, $V$ denotes the area intersections and E refers to the connections between all areas.
For any $e \in E$, there are two vertices: $V_{in}$ and $V_{out}$.
Since we focus on the area recommendation, <$V_{in}$, $V_{out}$> is equivalent to <$V_{out}, V_{in}$>.
The road network $G$ is defined as a city.
\end{definition}
\autoref{fig:rn} depicts an example of a road network with a recommended route by showing the profit.
In the graph, the road segment is two connected areas.
As mentioned before, the time for <$V_{in}, V_{out}$> is the same as <$V_{out}, V_{in}$> where $V$ denotes the point in the graph. 
For the recommended path $r$ that is shown as solid line $ e_1 \rightarrow  e_2 \rightarrow ... \rightarrow e_5$, the edges are one-directional due to the setting that taxi drivers cannot drive back and forth in the same single road segment.
This setting is to avoid causing traffic jams and accidents as mentioned in \citep{qu2014cost}.
The total profit for one area is equal to the multiplication of delivery time and the minute income based on historical data in that area.
The pick-up probability is dynamic and related to the arriving time and pick-up frequency. We will show more details on the pick-up probability in Section~\ref{sec:model}.

\subsection{Problem Statement}
Suppose that a taxi driver is at a location $c_0$, and $R$ represents the set of all possible routes starting at $c_0$. 
The driver can evaluate a recommended route $r_i\in R$ based on his/her demands (e.g., traveling distance, traveling time, profitability, etc.), denoted by a function $g(\cdot)$. The general route optimization problem can be formulated as:
\begin{myProblem}
(General Route Optimization Problem). 
Given $c_0$ the starting location (area) of a taxi driver, we recommend the optimal route $r^*$ to the driver. That is,
\begin{equation}
   r^*=\underset{r_i \in R}{argmax}  \left(g({r_i})\right),
\end{equation}
where $r_i \in R$ is any possible route with the starting location $c_0$; $g(\cdot)$ is the path quality evaluation function.
\end{myProblem}

Note that $g(\cdot)$ can be defined differently, such as the potential traveling distance to find the next passenger \citep{ge2010energy,ye2018applying}, expected traveling time to find the next passenger \citep{ye2018multi,xiao2020multi}, profitability of the recommended route \citep{qu2014cost}, and so forth. 
This paper mainly investigates the route optimization problem by evaluating the profitability, and the reasons are twofold.
First, no matter whether the recommended route will minimize the traveling distance or time,  profitability is always the fundamental user demand \citep{zhou2018optimizing,chen2020framework}.
Second, given the availability of taxi drivers' earning (per minute) data, profitability serves as a direct route quality measurement compared with other metrics.
Thus, we can reformulate the route optimization problem as a profitability-oriented route optimization problem that aims to maximize taxi drivers' income.

\begin{myProblem}
(Profitability-Oriented Route Optimization Problem). 
Given $c_0$ the start location (area) of a taxi driver and the current time, we recommend a route to maximize the earnings of the driver, and the optimal route $r^*$ can be represented as:
\begin{equation}
   r^*=\underset{r_i \in R}{argmax} \sum_{P_j,D_j \in r_i} \left(p_{gr}(P_j,D_j)\cdot INC(P_j)\cdot t_{P_j D_j}\right),
\end{equation}
where $r_i \in R$ is any possible route with the starting location $c_0$; $P_j$ and $D_j$ represent a pick-up and a drop-off area, respectively, and $p_{gr}(\cdot)$ is the pick-up probability; $INC(P_j)$ is the evaluated earning rate given a pick-up area; $t_{P_j D_j}$ is the expected traveling time from $P_j$ to $D_j$.
\end{myProblem}

\begin{figure}[!]
\centering
   \subfigure {\includegraphics[width=0.9\linewidth]{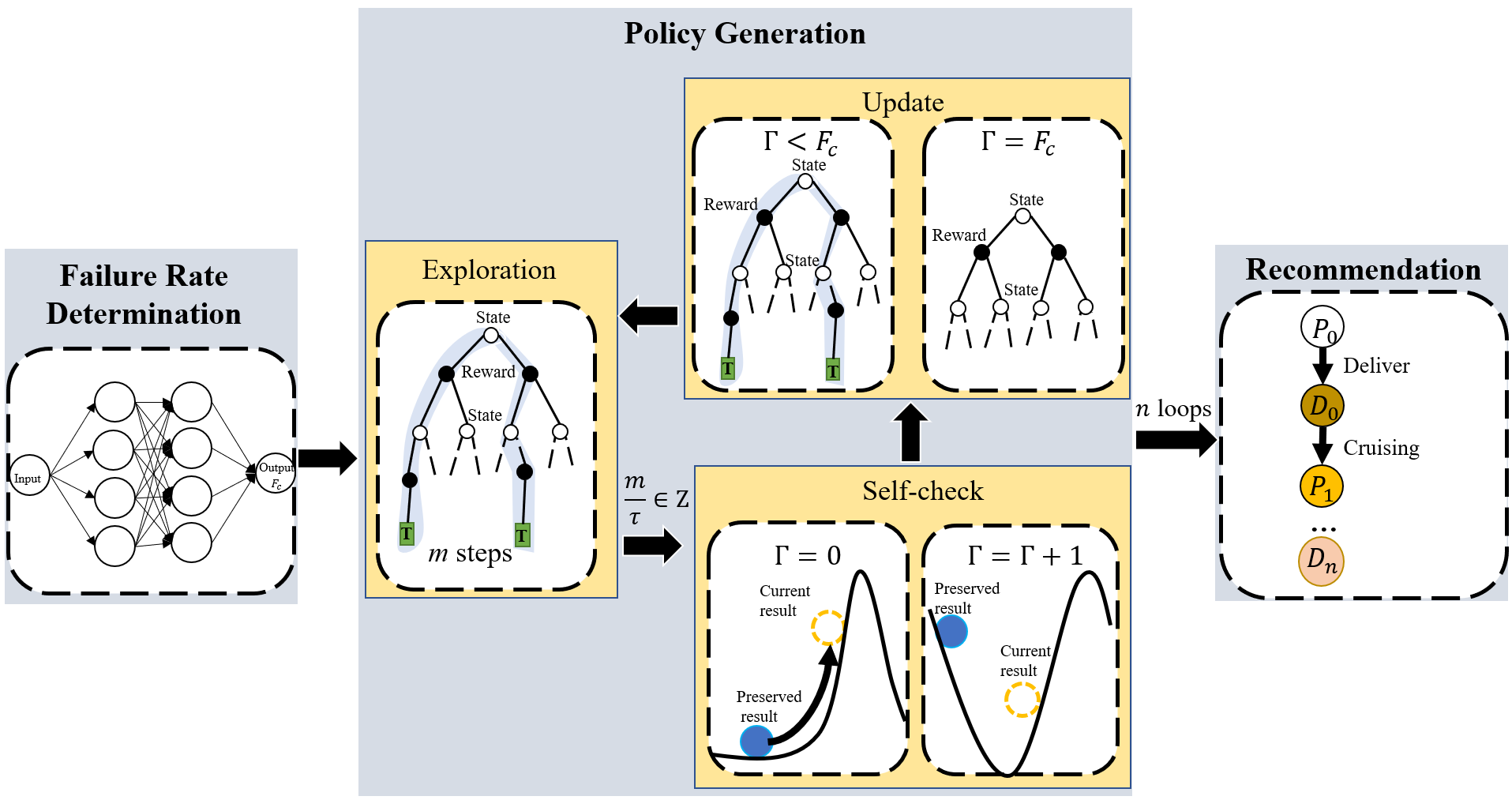}}
   \caption{The Structure of Our Method (ATDSC).
   }\label{fig:ms}
\end{figure}
\subsection{The Structure of the Proposed Method}
To address the problem defined above, we propose an adaptive temporal difference learning with self-check mechanism (ATDSC).
The structure of the ATDSC is demonstrated in \autoref{fig:ms}.
As can be seen, the framework contains two major components, including a neural network for pick-up failure rate determination and a deep-learning component for policy generation. 

Specifically, we first input the current pick-up frequencies and related travel records of the previous year to determine parameters related to the self-check mechanism via a deep neural network.
Then, in the policy generation part, we imply a Temporal Difference (TD) learning process to handle the exploring task for optimal routes.
Once the iterations are divisible by the check parameter $\tau$, the model will stop and check the quality of the current policy.
The self-check process can be viewed as a Markov Chain Process (MCP) whose transition probability is determined by the quality of the policy.
If the quality of the path under the updated policy keeps improving, the transition probability remains zero.
Suppose the quality of the path does not change after several rounds of checks. In that case, the transition probability will be set to one, and the model will be transferred to the original blank situation.
The above exploring strategy is designed to accelerate the convergence rate of our method. 
Finally, the model will output the preserved optimal path in the recommendation part.
We shall discuss the technical details with theoretical analysis in Section ~\ref{sec:model} and Section~\ref{sec:method}.

\section{Reinforcement Learning Framework}\label{sec:model}
In this section, we discuss our reinforcement learning (RL) framework and related design of the reward and transition probability.

\subsection{Temporal Difference Learning}
To form an RL framework for route recommendation, we consider different pick-up areas as different states.
The action $a$ can be defined as the selection of the next pick-up area to visit.
The reward is defined as the profit at pick-up location $P_i$.
Suppose that we have complete knowledge regarding the environment.
Denoted by $q_{\pi}(P_{i},a)$, the action-value function of starting at a pick-up location $P_i \in V$ under a policy $\pi$ can be defined as:
\begin{equation} \label{eq:valuestate}
    \begin{split}
    & q_{\pi}(P_{i},a)~\dot{=}~E_{\pi}\left[\sum_{j=0}^n \gamma^j S_{P_{i+j+1}}| P_t = P_{i}, A_t=a\right],\\
    \end{split}
\end{equation} 
where $\gamma$ is the discount rate; $n$ is the number of states on the path; $E_{\pi}[\cdot]$ is the value of a variable given the policy $\pi$; $t$ is a time step, and $P_t, A_t$ denotes the selected pick-up area and action at time $t$, respectively;
$S_{P_{i+j+1}}$ is the reward. 
The optimal state-value function is defined as:
\begin{equation}
\begin{split}
    & q_*(P_i,a)=E\left[S_{P_{i+1}}+\gamma \max_{\pi}v_{\pi}(P_{i+1})| P_t = P_{i}, A_t=a \right]\\
     & =\sum_{S_{P_{i'}},P_{i'}}p(P_{i'}, S_{P_{i'}}|P_{i},a)\left[S_{P_{i'}}+\gamma \max_{a'}q_*(P_{i'}, a')\right],\\
\end{split}
\end{equation}
where $v_{\pi}(P_{i+1})$ denotes the value function of a state $P_{i+1}$ under a policy $\pi$  and $p(P_{i'}, S_{P_{i'}}|P_{i},a)$ is the action-reward probability.  

Considering that there are usually many potential pick-up areas in a city (e.g., NYC), it would be expensive to obtain the exact policy value.
Temporal Difference (TD) learning is designed to explore the policy under an incomplete environment \citep{tesauro1992practical}, which is a good fit for our task.

The estimation of the state value under the policy $\pi$ via TD learning is shown as follows.
\begin{equation}
    v(P_i) \leftarrow v(P_i)+\eta\left(S_{P_{i+1}}+\gamma v(P_{i+1})-v(P_i)\right),
\end{equation}
where $\eta$ is the learning rate.
Based on the state value, we can evaluate the quality of the given policy $\pi$.
To find the optimal policy, we have to evaluate numerous policies, which is expensive.
To save computing time, we apply the off-policy TD control \citep{watkins1992q} which is defined as:
\begin{equation}
    Q(P_i,A_i) \leftarrow Q(P_i,A_i)+\eta\left(S_{P_{i+1}}+\gamma max_a Q(P_{i+1},a)-Q(P_i,A_i)\right),
\end{equation}
where $A_i$ denotes the action at location $P_i$.
This method is independent of the policy, which helps save time for generating the policy.
It can also directly approximate the optimal action-value function $q^*$ and generate the optimal policy $\pi^*$.

Based on the TD learning method, we have added a self-check mechanism to accelerate the convergence rate.
We have also designed a deep neural network to monitor the situation for the area and to determine the adaptive parameter.
The technical details on the self-check mechanism and the adaptive parameter $F$ will be explained in Section~\ref{sec:method}.

\subsection{Definition of Reward}
The reward is an essential component in RL and will guide the solution exploration.
Given that the profitability is usually considered as an essential evaluation metric for route recommendation \citep{qu2014cost}, we define the reward $S_{P_i}$ as the estimated earning at pick-up area $P_i$.
Following \citep{dong2014recommend}, we obtain the minute-level earning for each pick-up area at a given time based on historical data.
The drop-off point is predicted by capturing the distribution from historical data.
After we obtain the potential drop-off areas, we can estimate the earning of a given pick-up area $P_i$ at a specific time, as long as we compute the delivery time of <$P_i$, $D_i$>.
By checking the historical records of <$P_i$, $D_i$>, the delivery time is not difficult to be estimated (e.g., the average traveling time).

For the case that <$P_i$, $D_i$> has no matched historical record but connected as defined in \citep{cayula1992edge}, the delivery time is calculated as the weighted shortest path.
The weights of the directly connected area are equal to the average delivery time in the historical records.

If neither <$P_i$, $D_i$> appears in historical data nor connected, we determine the delivery time by randomly picking based on corresponding distribution.

We assume the delivery time in the data follows a normal distribution with the mean delivery time $\mu_{del}$ and the standard deviation $\sigma_{del}$.
Then, for a more effective sampling process, we set a lower bound $LB_{del}=\mu_{del}-3\sigma_{del}$, and an upper bound $UB_{del}=\mu_{del}+3\sigma_{del}$. 
We find that this range covers $99.7 \%$ cases in our data.
Thus, the delivery time $t_{del}(P_i , D_i)$ can be estimated as:
\begin{equation}
t_{del}(P_i , D_i)=
\begin{cases}
average~delivery~time, &\text{<}P_i, D_i\text{>} \text{exists~or~connected} \\
\sim U\left(max(0,LB_{del}),UB_{del}\right), & \text{otherwise}
\end{cases}
\end{equation}

Similarly, the cruising time from the drop-off point $D_i$ to the next pick-up point $P_{i+1}$ can be defined based on the lower and upper bounds, $LB_{cru}=\mu_{cru}-3\sigma_{cru}$ and $UB_{cru}=\mu_{cru}+3\sigma_{cru}$ as the following:
\begin{equation}\label{eqn:cru}
t_{cru}(D_i , P_{i+1})=
\begin{cases}
average~delivery~time, &\text{<}D_i, P_{i+1}\text{>} \text{exists~or~connected} \\
 \sim U(max(0,LB_{cru}),UB_{cru}), & \text{otherwise}
\end{cases}
\end{equation}

\textbf{Reward Function}. Considering the route profitability, the reward can be defined as:
\begin{equation}
    reward (P_i)=\frac{INC(P_i) \cdot t_{del}(P_i , D_i)}{t_{del}(P_i , D_i)+t_{cru}( D_i,P_{i+1} )},
\end{equation}
where $D_i$ is the drop-off point for $P_i$; $P_{i+1}$ is the next pick-up point.

When estimating the income, we clean the row data to adjust abnormal pick-up points (e.g., data errors or low-probability cases).
We first calculate the average income (per minute) for the time period.
Given the delivery time for a pick-up point, the target income is defined as the multiplication of the average income of the pick-up point and the delivery time.
If the real total income for the given area is higher than the target income, it will be replaced by the target income.
This process removes the outliers based on each pick-up point's local information.

Then, we further modify the data by considering the global reasonableness. The following data cleaning process is performed:
\begin{equation}
INC(P_i) = 
\begin{cases}
\mu_{INC}+3\sigma_{INC}, & \text{if}~INC(P_i)>\mu_{INC}+3\sigma_{INC}\\
\lambda \cdot INC(P_i), & \text{if}~\#~of~pickups~at~P_i<average~\#~of~pickups
\end{cases}
\end{equation}
We set $\lambda=0.5$ in the experiments. 
This process will handle pick-up points with abnormally high-income and those with insufficient historical records.
Note that the reward function can be modified based on different evaluation metrics.

\subsection{Definition of Action-Reward Probability}
Given a dynamic real-world scenario, the action-reward probability is used to evaluate the probability that a driver will receive the reward from the drop-off area to the next pick-up area.
As mentioned in \citep{veloso2011urban}, taxi drivers usually do not want to travel a long distance for the next pick-up location, which is related to the cruising time.

Since cruising time varies for different locations, we normalize the predicted cruising time, and the probability related to cruising time at area $P_{i+1}$ is defined as:
\begin{equation}\label{eqn:crupro}
    p_{cru}(D_i,P_{i+1})=
    \begin{cases}
    \beta, & \text{<$D_i, P_{i+1}$> not~connected}\\
    1-\beta\frac{\left|t_{cru}(D_i,P_{i+1})-\min_{j} t_{cru}(D_j,P_{j+1})\right|}{\max_{j} t_{cru}(D_j,P_{j+1})- \min_{j } t_{cru}{(D_j,P_{j+1}})}, & \text{otherwise}
    \end{cases}
\end{equation}
The historical data of pick-up information has an influence on the pick-up probability \citep{dong2014recommend}.
As mentioned in \citep{rong2016rich}, the pick-up probability can be represented by the proportion of successful pick-ups.
We thus count the records of successful pick-ups from the historical data and normalize them.
The probability related to frequency for pick-up area $P_{i+1}$ is defined as:
\begin{equation}\label{eqn:pickuppro}
    p_{pick}(P_{i+1})=\frac{\#~of~pickups~at~P_{i+1}-minimum~\#~of~pickups}{maximum~\#~of~pickups-minimum~\#~of~pickups}.
\end{equation}
Based on Eqs. \eqref{eqn:crupro} and \eqref{eqn:pickuppro}, the probability of getting reward at the pick-up point $P_{i+1}$ can be represented as:
\begin{equation}\label{eqn:getreward}
p_{gr}(D_i , P_{i+1})=\alpha_1 p_{cru}(D_i , P_{i+1})+ \alpha_2p_{pick}(P_{i+1}),
\end{equation}
where $\alpha_1+\alpha_2=1$ and $D_i$ is the drop-off area.
As suggested in \citep{lu2016intelligent}, we add a parameter $\omega$ to $p_{gr}$ to represent the effect of anomalies (e.g., unexpected social events) on the pick-up probability.
Then Eq. \eqref{eqn:getreward} can be rewritten as:
\begin{equation}\label{eqn:getreward2}
p_{gr}(D_i , P_{i+1})=\omega\left(\alpha_1 p_{cru}(D_i , P_{i+1})+ \alpha_2p_{pick}(P_{i+1})\right).
\end{equation}
We set $\omega=1$ as default to represent the normal case; however, if the pick-up times at an area is less than $80\%$ of the previous year, then we view this area as an abnormal area and set $\omega<1$.

\textbf{Action-reward Probability Function}. 
Based on Eq. \eqref{eqn:getreward2}, the action-reward probability for the pick-up point $P_{i+1}$ can be written as:
\begin{equation}
    p(P_{i+1},S_{P_{i+1}}|P_{i},a)=
    \begin{cases}
    p_{gr}(D_i , P_{i+1}), & S_{P_{i+1}}=reward(P_{i+1})\\
    1-p_{gr}(D_i , P_{i+1}),& S_{P_{i+1}}=0\\
    \end{cases}
\end{equation}
\begin{algorithm}[h]
\SetAlgoLined
\SetKwInOut{Input}{Input}
 \Input{$r$, $t$}
 $t_{sel}, count, profit \leftarrow 0$\;
 \While{$t_{sel}<t$}{
  $profit \leftarrow profit+E\left(earning~for~r[count]\right)$\;
  $t_{sel} \leftarrow t_{sel}+delivery~time~for~r[count]+cruising~time~for~r[count]$\;
  $count \leftarrow count+1$\;
 }
 Output profit\;
 \caption{Main Function for Path Evaluation: eval}
 \label{alg:eval}
\end{algorithm}

\section{Enhanced Temporal Difference Learning}\label{sec:method}
This section discusses our optimization method designed for the reinforcement learning framework with related theoretical analysis.

\subsection{Optimization}
It is computationally expensive to acquire the full knowledge of the environment for a dynamic system.
As a model free reinforcement learning (RL) method, temporal difference (TD) learning holds the strength in exploring dynamic environment with unknown outcomes. 
While traditional TD learning searches a sequence of optimal solutions for all the states, we propose to focus on the states within a given time interval to avoid unnecessary computing. 

To accelerate the convergence rate and improve the outcome, we introduce a self-checking mechanism and an adaptive parameter into the TD learning.
During the exploring process, if steps are divisible by a predetermined integer $\tau$, the exploring process stops and the current policy will be checked.
If the current policy is better than the preserved policy, it replaces the preserved policy and continues to update the current one; the model will also set the count variable $\Gamma$ to zero.
Otherwise, $\Gamma$ increases by one and compared to the pre-determined parameter $F_c$.
If $\Gamma < F_c$, the update will continue on the current policy.
If $\Gamma = F_c$, the model will restart and explore based on the original blank policy.

\subsubsection{Self-check Mechanism} 
The self-checking process is equivalent to a Markov Chain Process in which the transition rate is controlled by an adaptive factor called \textit{failure count}.
If the count variable $\Gamma$ is smaller than the \textit{failure count}, the transition probability is equal to zero; otherwise, it is equal to one. 

After $\tau$ iterations, the model will evaluate the current optimal route and compare it to the preserved one.
If the current recommendation is better than the preserved one, then the current recommendation will replace the preserved one, and the $\Gamma$ will return to zero.
The transition probability is also equal to zero.
Otherwise, $\Gamma$ will increase by one, and the current policy update will continue. 

Upon $\Gamma$ being equal to the \textit{failure count} $F_c$, it indicates that the current result does not improve for continuous $F_c$ times of check.
In this case, we set the transition probability to one.
The model will preserve the current recommendation and restart the exploration from the original policy.
Importantly, the effectiveness of the self-check mechanism can be theoretically verified.
\begin{proposition}\label{th1}
Given that $k$ is the number of restarts during the self-check process, if $k>0$, the self-check mechanism enlarges the possibility to locate the optimal solution in the dynamic route optimization problem. 
\end{proposition}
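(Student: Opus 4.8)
The plan is to model the entire optimization as a sequence of independent exploration runs and reduce the claim to an elementary comparison of success probabilities. First I would fix notation: let $p\in(0,1)$ denote the probability that a single exploration run---starting from the blank policy and proceeding until the self-check mechanism records $F_c$ consecutive non-improvements---succeeds in locating the optimal route $r^*$. The procedure of Section~\ref{sec:method} consists of the initial run together with $k$ subsequent restarts, each of which resets the policy to the original blank state. The crucial structural observation is that the preserved policy always retains the best route encountered so far, so the whole self-check procedure locates $r^*$ if and only if \emph{at least one} of the $k+1$ runs succeeds.

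Next I would establish that the $k+1$ runs may be treated as independent and identically distributed Bernoulli trials. Identical distribution follows because every restart begins from the same blank policy and the environment is stationary within a single optimization episode, so each run confronts the same search problem with the same success probability $p$. Independence follows from the Markov Chain Process interpretation of the self-check mechanism stated earlier: once the transition probability fires and the policy is reset to the blank state, the trajectory of the new run depends only on that common initial state and on the fresh stochasticity injected through the action-reward probability $p_{gr}$ of Eq.~\eqref{eqn:getreward2}, not on the outcomes of earlier runs. Under this model the probability that all $k+1$ runs fail is $(1-p)^{k+1}$, so the probability that the self-check procedure eventually locates the optimum is $1-(1-p)^{k+1}$.

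To conclude I would compare this with the baseline: classical TD learning without the self-check performs a single exploration trajectory and therefore succeeds with probability $p$. For any $k>0$ and any $p\in(0,1)$ we have $(1-p)^{k+1}<(1-p)$, hence
\begin{equation}
1-(1-p)^{k+1} \;>\; 1-(1-p) \;=\; p .
\end{equation}
The strict inequality shows that each restart strictly enlarges the probability of locating $r^*$, which is exactly the proposition.

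I expect the main obstacle to be rigorously justifying the independence of successive runs, since all runs share the same learned environment statistics (rewards and transition probabilities estimated from historical data), making the trials at best conditionally independent given that estimated model. The cleanest resolution is to condition on the fixed environment parameters and appeal to the Markov property: after a restart sends the process back to the blank policy, the exploration noise driving action selection is drawn afresh and is independent of prior runs, so conditional independence suffices for the product $(1-p)^{k+1}$. A secondary point to verify is that $p$ is bounded strictly inside $(0,1)$ so the inequality is strict---$p<1$ because a single run may converge to a suboptimal policy, and $p>0$ because blank-policy exploration assigns positive probability to every reachable route.
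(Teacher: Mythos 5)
Your proof is correct and takes essentially the same route as the paper's: both model the $k$ restarts as independent Bernoulli attempts with a common per-run success probability and compare $1-(1-p)^{k+1}$ against the single-run baseline $p$ via the same elementary inequality. The only difference is that the paper instantiates $p = 1/M$ under a uniform-states idealization (with $\tau = 1$, $M$ iterations, and the initial state visited once), whereas you keep $p$ abstract and are more explicit about the independence assumption that the paper leaves implicit.
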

\begin{proof}
Suppose that the probability for choosing the right state for TD learning and our method ATDSC are denoted as $p_{td}$ and $p_{atdsc}$. 
For proving the proposition, we have to show that $p_{atdsc} > p_{td}$ when $k>0$. 
In a dynamic route recommendation system, suppose that we want to find the next optimal pick-up point and and all the states are equally distributed; we set $\tau=1$.
Given $M$ states, we assume that there are $M$ iterations.
As $M$ is large, we suppose that the initial state will only be visited once.
For the TD learning, since the initial state will only be visited once, and all the states are equally distributed, the probability for choosing the right state of the next pick-up point is:
\begin{equation}\label{PTD}
    p_{td}=\frac{1}{M}.
\end{equation}

For the self-checking enhanced TD learning method, we assume that there exists a $F_c$ that will lead to $k$ times of model restarting.
If $k=0$, which indicates that the model will not restart and the result for checking is improved, the probability for choosing the right state is the same as the classical TD learning method.
The probability for such cases is the lower limit for our method.
There also exists an extreme case that the model will always restart after $F_c\tau$ steps. 
As $\tau=1$, then $k=\lfloor \frac{M}{F_c} \rfloor$, 
and the probability for such case should be the upper limit for our method.
The probability for our model ATDSC to select the right state is shown as:
\begin{equation}\label{PTDSC}
p_{atdsc} = 1-\prod_{i=0}^{k} \left(1-\frac{1}{M}\right),
\end{equation}
where $0 \leq k \leq \lfloor \frac{M}{F_c} \rfloor$.
Based on the following fact:
\begin{equation}
    \begin{cases}
    \frac{1}{M}=1-\left(1-\frac{1}{M}\right) \\
    (1-\frac{1}{M})\prod_{i=1}^{k} \left(1-\frac{1}{M}\right)<1-\frac{1}{M}
    \end{cases}
\end{equation}
we can conclude that $\frac{1}{M} \leq 1-\prod_{i=0}^{k} \left(1-\frac{1}{M}\right)$, indicating that as long as $k>0$, the self-checking mechanism has a higher possibility to achieve the optimal state.
\end{proof}
By adjusting $F_c$, we can ensure that the model will restart and $k>0$.
Proposition \autoref{th1} suggests that our model with the proposed self-check mechanism should achieve a better performance than the original version of TD learning.
\begin{algorithm}[h]
\SetAlgoLined
\SetKwInOut{Input}{Input}
\Input{time limit $t$, initial state $s_0$, failure rate $\Gamma$, self-check iterations $\tau$, learning rate $\eta$, discount rate $\gamma$, number of states $M$ and restart integer $c$}
 $ r^*,Q^*,Q \leftarrow 0$\;
 $s \leftarrow s_0$\;
 $I_o\leftarrow ANN(\text{area information})$\;
 \eIf{$I_o$==1}{
 $F_c\leftarrow c\cdot(\frac{N_{normal}}{M})^{1/3}$\;
 }
 {
 $F_c\leftarrow c$\;
 }
 \While{The stopping criteria is not met}{
  Randomly choose an action $a$ from $s$\;
    Taking action $a$, get $s'$ and $Rew(s,s')$\;
    $Q(s,a) \leftarrow Q(s,a)~+$
    $\eta\left(Rew(s,s') +\gamma max_{a'} Q(s',a')-Q(s,a)\right)$\; 
    $s \leftarrow s'$\; 
 \If{iteration divides $\tau$ }{
 Generate path $r$ based on $Q$ and $S_0$\;
 \eIf{\text{eval}($r,t$) $-$ \text{eval}($r^*,t$) > 0}{
 $r^*\leftarrow r$\;
 $ Q^* \leftarrow Q$\;
 $ \Gamma \leftarrow 0$\;
 }{
 \eIf{$\Gamma < F_c$}
 {
 $\Gamma \leftarrow \Gamma+1$\;
 }{
 
 $Q,\Gamma \leftarrow 0$\;
 $s \leftarrow s_0$\;
 }
 }
 }
 }
 \caption{ATDSC}
 \label{alg:TDSC}
\end{algorithm}

\subsubsection{Adaptive failure count $F_c$}
The failure count $F_c$ not only helps speed up the convergence rate, but also controls the performance of the model.
If the failure count is too large, the performance will be improved too slowly because the model has to wait for $F_c\tau$ steps to restart.
If the failure count is too small, then the exploring process of the TD learning will be restricted as the model always restarts without getting enough knowledge of the current environment.
To properly select $F_c$, we formalize the following proposition.

\begin{proposition}\label{th2}
As long as the reward is constrained within a certain scope, the difference between action-reward probabilities is inversely proportional to the failure count $F_c$. 
\end{proposition}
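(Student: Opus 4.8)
The plan is to turn the informal statement into a precise scaling relation and then verify that it matches the adaptive rule $F_c = c\,(N_{normal}/M)^{1/3}$ used in Algorithm~\ref{alg:TDSC}. First I would discharge the hypothesis that ``the reward is constrained within a certain scope.'' This is exactly what the preceding data-cleaning rules enforce: $INC(P_i)$ is capped at $\mu_{INC}+3\sigma_{INC}$, while $t_{del}$ and $t_{cru}$ are confined to their $\mu\pm 3\sigma$ windows, so every reward $S_{P_i}$ lies in a bounded interval $[0,R_{\max}]$. Together with $\gamma<1$, the action-value function in \eqref{eq:valuestate} is then uniformly bounded, which is what lets me treat the action-reward probability $p_{gr}$ of \eqref{eqn:getreward2} as the dominant source of variation between states.

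Second, I would give ``the difference between action-reward probabilities'' a concrete meaning. For a normal area the probability is $\alpha_1 p_{cru}+\alpha_2 p_{pick}$, whereas the anomaly factor $\omega<1$ rescales it for an abnormal area, producing a per-area shift of $(1-\omega)(\alpha_1 p_{cru}+\alpha_2 p_{pick})$. I would define the aggregate gap $\Delta p$ as the expected separation between the normal and abnormal regimes; under the equally-distributed-states assumption already adopted in \autoref{th1}, this gap is a decreasing function of the normal fraction $N_{normal}/M$ and increases as anomalies proliferate.

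Third, I would link $\Delta p$ to $F_c$ through the restart behaviour of the self-check mechanism. Following \autoref{th1} with $\tau=1$, the number of restarts over $M$ iterations is $k\approx M/F_c$, and the chance of isolating the optimal state is $1-(1-1/M)^{k+1}$, which increases with $k$ and hence decreases with $F_c$. The resolving power demanded of the search is set by $\Delta p$: a small gap calls for more exploration (large $F_c$, few restarts), while a large gap induced by many anomalies calls for frequent restarts (small $F_c$). Matching the admissible $F_c$ to $\Delta p$ under boundedness yields the relation
\[
\Delta p \;\propto\; \frac{1}{F_c}, \qquad\text{equivalently}\qquad F_c \;\propto\; \left(\frac{N_{normal}}{M}\right)^{1/3},
\]
which is precisely the adaptive rule and establishes the claimed inverse proportionality.

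The hard part will be making ``difference between action-reward probabilities'' unambiguous and defending the inverse-proportional form—rather than a merely monotone one—with the specific cube-root exponent. This forces me to commit to the equally-distributed-states model, to convert the discrete restart count of \autoref{th1} into a continuous notion of resolving power, and to use the bounded-reward hypothesis to argue that all competing sources of variation are negligible so that $\Delta p$ alone fixes the optimal $F_c$. If the exponent resists a first-principles derivation, I would fall back on the robust qualitative claim that $\Delta p$ and $F_c$ vary inversely, and present the cube-root form as the calibration validated in the experiments.
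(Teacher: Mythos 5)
Your argument is sound at the same heuristic level as the paper's own proof, but it follows a genuinely different route. The paper's proof never invokes $\omega$, the normal/abnormal split, or Proposition~\ref{th1}. It compares two arbitrary pick-up points: writing the expected rewards as $E(P_i)=p_{gr}(D_{i-1},P_i)\,reward(P_i)$ and $E(P_j)=p_{gr}(D_{j-1},P_j)\,reward(P_j)$ and assuming $reward(P_i)>reward(P_j)>0$, it derives the lower bound $|E(P_i)-E(P_j)|>\left|reward(P_j)\bigl(p_{gr}(D_{i-1},P_i)-p_{gr}(D_{j-1},P_j)\bigr)\right|$, where the bounded-reward hypothesis makes $reward(P_j)$ ``controllable,'' so a large probability gap forces a large expected-reward gap; since TD learning improves its policy through expected rewards, a large gap means less exploration is needed before states become distinguishable, hence a smaller $F_c$ suffices, and conversely. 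You instead localize the gap to the anomaly factor $\omega$ and route the exploration argument through the restart analysis of Proposition~\ref{th1}. The paper's pairwise inequality buys generality (it applies to any two states, not only cross-regime pairs) and a direct link to the quantity the TD updates actually see; your route buys an explicit mechanism for where the gap comes from and a more quantitative restart story.

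Two caveats. First, your ``equivalently $F_c\propto(N_{normal}/M)^{1/3}$'' is a non sequitur: the monotone relation $\Delta p \propto 1/F_c$ does not yield the cube-root exponent, and the paper does not derive it either --- that exponent appears only later, in the Failure Count Function, as a design choice qualitatively supported by Lemma~\ref{lem1}. Your fallback to the qualitative inverse relation is exactly the right scope and matches what the paper actually establishes. Second, your claim that the aggregate gap is a decreasing function of $N_{normal}/M$ is fragile: if $\Delta p$ is averaged over all pairs of areas, the separation is non-monotone in the normal fraction (it vanishes when all areas are normal or all abnormal), so $\Delta p$ should be defined as the cross-regime gap --- which is what your $\omega$ computation actually produces --- rather than as an expectation over the area mix.
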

\begin{proof}
The difference of the expected reward $E(P_i)$ and $E(P_j)$ can be represented as:
\begin{equation}\label{dif1}
    |E(P_i)-E(P_j)|=\left|p_{gr}(D_{i-1},P_i)reward(P_i)-p_{gr}(D_{j-1},P_j)reward(P_j)\right|,
\end{equation}
where $D_{i-1}$ is the drop-off point before $P_i$, and $D_{j-1}$ is the drop-off point before $P_j$.
Without loss of generality, we assume that $reward(P_i)>reward(P_j)>0$. 
Then, we can obtain the following inequality:
\begin{equation}\label{dif2}
    |E(P_i)-E(P_j)|>\left|reward(P_j)\left(p_{gr}(D_{i-1},P_i)-p_{gr}(D_{j-1},P_j)\right)\right|.
\end{equation}
Since we normalize the reward based on the restriction, the value of $reward(P_j)$ is controllable.
If the difference of action-reward probabilities is large, $|E(P_i)-E(P_j)|$ should be larger.
The policy in TD learning is improved by expected rewards. 
A larger difference indicates that the model needs to do less to explore further. 
Thus, the failure count can be smaller.
On the other hand, considering that a small $|E(P_i)-E(P_j)|$ suggests a smaller difference between state values, the model needs to explore more steps for a better decision.
\end{proof}

Proposition \autoref{th2} offers the rule for determining the value of $F_c$. 
Since we add a penalty term to Eq. \eqref{eqn:getreward2}, the difference between expected rewards $|E(P_i)-E(P_j)|$ must be larger in abnormal areas.
Given the strength of deep learning models in addressing classification problems \citep{guo2020weighted,guo2019weighted,an2020rahm,liu2020unified,sun2019exploiting,zhang2018caden}, we apply a deep Artificial Neural Network (ANN) to help decide to change the value of the failure count $F_c$.
The inputs include the current average successful pickups for each area and the average number of successful pickups in the previous year. 
We label the records as follows. 
According to the pandemic outbreak dates reported in news (e.g., \footnote{https://www.prnewswire.com/news-releases/impact-of-covid-19-on-the-taxi-and-limousine-services-market--tbrc-report-insights-301054745.html}), if the number of abnormal area is more than half of area numbers, we set the label to one, otherwise, zero.
The output of the ANN $I_o$ can be defined as:
\begin{equation}
    I_o=
    \begin{cases}
    0, & \text{unchanged $F_c$} \\
    1, & \text{changed $F_c$}
    \end{cases}
\end{equation}

Proposition \autoref{th2} also suggests the following pattern regarding the \textit{failure count}.

\begin{lemma}\label{lem1}
As long as normal areas exist, the more the abnormal areas we have, the smaller the failure count will be.
\end{lemma}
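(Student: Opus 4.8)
The plan is to read the adaptive rule for the failure count directly off Algorithm~\ref{alg:TDSC}: in the anomaly-detected branch ($I_o=1$) the algorithm sets $F_c = c\,(N_{normal}/M)^{1/3}$, so the whole lemma reduces to a monotonicity statement about this single expression. First I would fix notation, writing $M$ for the (fixed) total number of areas, $N_{normal}\ge 0$ for the number of normal areas, and $N_{abn} = M - N_{normal}$ for the number of abnormal areas, so that the hypothesis ``normal areas exist'' is exactly $N_{normal}\ge 1$, i.e. $N_{abn}\le M-1$. Substituting $N_{normal}=M-N_{abn}$ turns the rule into $F_c(N_{abn}) = c\,(1 - N_{abn}/M)^{1/3}$, a function of the single discrete variable $N_{abn}$ with $c>0$ and $M$ held fixed.

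Next I would establish strict monotonicity. The argument of the cube root, $1 - N_{abn}/M$, is strictly decreasing in $N_{abn}$, and because $x\mapsto x^{1/3}$ is strictly increasing on $[0,\infty)$ and $c>0$, the composition $F_c(N_{abn})$ is strictly decreasing as well: for $N_{abn} < N_{abn}'\le M-1$ one has $1 - N_{abn}/M > 1 - N_{abn}'/M > 0$ and hence $F_c(N_{abn}) > F_c(N_{abn}') > 0$. The hypothesis $N_{normal}\ge 1$ is precisely what keeps the argument of the cube root strictly positive, so that $F_c$ stays positive and the self-check mechanism remains non-degenerate---a vanishing failure count would force a restart at every check and destroy the exploration. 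This is the full content of the lemma: adding abnormal areas while at least one normal area survives strictly shrinks $F_c$.

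Finally I would remark that this conclusion is not merely an artifact of the chosen formula but is exactly what Proposition~\ref{th2} predicts: converting more areas to abnormal applies the penalty factor $\omega<1$ of Eq.~\eqref{eqn:getreward2} to more of the action-reward probabilities, which widens the typical gap $|E(P_i)-E(P_j)|$ between normal and abnormal pick-up points, and by the inverse relationship of Proposition~\ref{th2} a wider gap corresponds to a smaller $F_c$. There is no genuine analytic obstacle here---the core is a one-line monotonicity check---so the only point that needs care is stating what is held fixed. I take $M$ fixed (the city's areas do not change while more of them turn abnormal during an event such as COVID-19); the same conclusion holds if instead $N_{normal}$ is held fixed and $M$ is allowed to grow, since $N_{normal}/M$ again decreases. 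Making that modeling choice explicit, rather than any calculation, is the main thing to get right.
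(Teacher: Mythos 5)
Your main argument runs the paper's logic backwards. In the paper, Lemma~\ref{lem1} comes \emph{before} the failure count function: its role is to justify the design choice $F_c = c\,(N_{normal}/M)^{1/3}$ (the text immediately following the lemma reads ``Lemma~\ref{lem1} shows that $F_c$ is related to the number of normal areas, and hence we can define the failure count function as follows''). The formula you read off Algorithm~\ref{alg:TDSC} is therefore not an available premise; it is the downstream consequence that the lemma is supposed to license. Taking that formula as the definition of $F_c$ and checking that it is decreasing in the number of abnormal areas proves only that the paper's chosen formula is \emph{consistent} with the lemma --- it cannot establish the lemma in the sense the paper intends, namely as a claim about what an adaptive failure count should do, derived from the learning dynamics rather than from a formula engineered to embody it. That is a circularity, and the (perfectly correct) monotonicity computation is doing essentially no work.

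The paper's own proof is the argument you relegate to a closing remark: by Proposition~\ref{th2}, the failure count should be inversely related to the differences $|E(P_i)-E(P_j)|$ of expected rewards; the penalty $\omega<1$ in Eq.~\eqref{eqn:getreward2} makes the gap between normal and abnormal areas large whenever at least one normal area exists; and since exploration can then be confined to the normal areas, having more abnormal areas (hence fewer normal ones and larger relative gaps) warrants a smaller $F_c$. So the substance of the correct proof is present in your write-up, but with the emphasis inverted: what you call a supporting remark is the proof, and what you present as the proof is a consistency check of a formula that the lemma itself is meant to motivate. (Admittedly the paper's argument is qualitative rather than rigorous, and the lemma reads as a design principle; but a blind proof still has to run in the paper's direction, from Proposition~\ref{th2} to the monotonicity claim, not from the implemented formula back to the claim.)
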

\begin{proof}
Proposition \autoref{th2} suggests that a large difference of expected rewards can help distinguish between the states.
Based on the condition that normal areas exist, the difference between the normal area and the abnormal area must be large, reflected by the effect of the penalty term.
Since we only care about the areas within a given traveling time range, and the expected rewards for the normal areas are assumed to be larger than those of abnormal areas, we ignore the expected reward of abnormal area while guaranteeing to explore all normal areas.
Thus, $F_c$ should be proportional to the number of normal areas and negatively proportional to the number of abnormal areas.

\end{proof}
Lemma \autoref{lem1} shows that $F_c$ is related to the number of normal areas, and hence we can define the failure count function as follows.

\textbf{Failure Count Function.}
The function for the failure count $F_c$ is defined as:
\begin{equation}
    F_c=
    \begin{cases}
    c, &I_o=0\\
    c\cdot(\frac{N_{normal}}{M})^{1/3}, &I_o=1
    \end{cases}
\end{equation}
where $N_{normal}$ is the number of normal area; and $c$ is the default value (integer).
\subsection{Adaptive TD Learning with Self-Check}
Algorithm~\ref{alg:eval} describes how we evaluate the policy.
Given the path $r$ under the policy and time period, the algorithm will calculate the expected earning on the path within the given time period.

Note that, although our method is designed for sequential route recommendation (long-term), it can also be implemented for a one-step recommendation task. 
To do so, we can set a small number for the expected working hours (e.g., half-hour).


Recall that the expected income is calculated by multiplying the delivery time by the earning rate from historical data.
Once the earning is added, the delivery time to the current drop-off area and the cruising time from the current drop-off point to the next pick-up area will all be accumulated.
If the accumulated time exceeds the given time variable, the algorithm will stop exploring and output the total profit.

The detailed procedure about the update strategy for our method is illustrated in Algorithm~\ref{alg:TDSC}.
The predicted time range $t$ and the initial state $s_0$ are user-determined parameters. 
The model will adjust $F_c$ based on the result from deep learning initially.
Note that during the TD learning, when the number of iterations reaches the multiple of $\tau$, the model will hold the search until it finishes comparing the current policy with the preserved one. 

During the comparison, the evaluation is done by Algorithm~\ref{alg:eval}.
If the result is better than the preserved one, the search will continue, and the current one will replace the preserved policy.
Otherwise, the preserved policy stays.
If the result does not improve after $c \tau$ iterations, then the model will launch a new policy exploration.


\section{Experiments}\label{sec:result}
In this section, we discuss the data, experimental settings, and results.

\subsection{Data and Preprocessing}
\textit{Data Description.}
Our experiments are based on the taxi data from New York City \footnote{https://www1.nyc.gov/site/tlc/about/tlc-trip-record-data.page}{\label{note1}}.
The time period of our data ranges from January 2020 to June 2020, covering the periods before, during, and after the first wave of the COVID-19 pandemic outbreak.
Our data contain taxi travel records from five boroughs and a related region of New York City: Bronx, Brooklyn, Manhattan, Queens, Staten Island, and the Newark International Airport (EWR).
The data include fields capturing pick-up and drop-off dates/times, pick-up and drop-off locations, trip distances, total payments, payment types, and driver-reported passenger counts.
A summary of important data statistics is reported in Table~\ref{tab:sumdata}.
Although we have demonstrated our method using the NYC data, our method can be readily applied to other cities as well.
\begin{table}
\centering
\begingroup
\setlength{\tabcolsep}{6pt} 
\renewcommand{\arraystretch}{1} 
  \caption{Summary of Data Statistics.}
  \label{tab:sumdata}
\begin{tabular}{l|cccccc}
\hline
\diagbox{Features}{Month (2020)} & Jan & Feb & Mar & Apr & May &  Jun\\ 
\hline
Total Trip Records ($\times 10^7$) & 11.5 & 11.3 & 5.4  & 0.4 & 0.6 & 1.0 \\
Average Income per Trip (\$) & 19 & 19 & 19 & 16 & 20 & 19 \\
Average Delivery Time (minutes) & 15 & 17 & 15 & 11 & 13 & 14 \\
Average Cruising Time (minutes)& 10 & 9 & 11 & 14 & 15 & 12 \\
\hline
\end{tabular}
\endgroup
\end{table}


\textit{Data Preprocessing.}
We assume that only neighboring areas are connected, otherwise, they cannot be reached directly.
For example, if area A and area C are the neighbors of area B, and area A is not area C's neighbor, then the trip from A to C should pass through B. 
The traveling time from A to C is estimated as the average traveling time from A to B plus the average traveling time from B to C.    
We collect the relations between each area from the Taxi Zone Map. 
For non-reachable area, the estimation of delivery time and cruising time is mentioned in Section~\ref{sec:model}.
Based on \citep{dong2014recommend}, we use the historical records of trip distances, total payment, and pick-up and drop-off locations to compute the average earning rate per minute as well as the pick-up/drop-off frequencies of each location.

\subsection{Experimental Settings}
Now we discuss the parameter settings of our method, the implementation of all benchmark methods, and the validation metrics.
\subsubsection{Parameter Settings}
According to report from the District Department of For-Hire Vehicles, 
the trips decreased about 90$\%$ during the COVID-19 outbreak.
We set $\omega=0.1$ for abnormal areas.
The weights for action-reward probability are set to: $\alpha_1=\alpha_2$.
The non-connected getting reward probability $\beta$ is set to 0.1.
The default value $c$ is set to 8 and the discount rate $\gamma$ is equal to 0.9.
Both the learning rate and the learning rate decay for reinforcement learning are set to 0.01.
The total iterations are set to 300,000.
All algorithms are implemented in Python, and experiments are conducted on the Seawulf, a high-performance computing cluster \footnote{
SeaWulf is a computational cluster in Stony Brook University, using top of the line components from Penguin, DDN, Intel, Nvidia, Mellanox and numerous other technology partners. 
See more information:  https://it.stonybrook.edu/help/kb/understanding-seawulf}. Each processor we used in our experiments has two Intel Xeon E5-2690v3 12 core CPUs and 128 GB DDR4 Memory.
For a fair comparison, we report the mean performance of 30 independent experiments (indicating 30 recommended paths) based on random initial states. 
Note that the error bars in figures represent the standard errors.
\subsubsection{Baselines}
We compare our method (ATDSC) with four baselines, including REI, MPP, MNP, and PCD, in which the REL can be considered the state-of-the-art method of RL-based route recommendation.
\begin{itemize}
    \item \textbf{REI} \citep{ji2020spatio}. 
    The original method focuses on recommending routes under the guidance of the deep RL method.
    The model is led by a classic RL method assuming the full knowledge of the environment is known. Hence it cannot be applied to our problem directly.
    Thus, we implement the RL based on TD learning, with which the full knowledge of the environment is not required.
    \item \textbf{MPP} \citep{yuan2011find}. 
    This method is a greedy method in terms of the pick-up probability. 
    It aims to recommend the area with the maximum pick-up probability to the taxi drivers.
    The pick-up probability is scratched from the historical data.
    \item \textbf{MNP} \citep{qu2014cost}.
    This method aims to maximize the area profit to the taxi driver.
    Since we calculate the minute income for each area, this method is equivalent to the greedy method in terms of the minute income.
    \item \textbf{PCD} \citep{luo2018dynamic}.
    The original method is to recommend the path with minimal potential cruising distance. 
    Since we replace the distance with time, it is equivalent to finding the pick-up point with minimal cruising time in our setting.
\end{itemize}
\subsubsection{Validation Metrics}
The main validation metrics include the hourly and weekly expected income. 
We also compute standard errors to evaluate the reliability of our results based on 30 independent experiments for path exploring.

\textbf{Expected hourly income.}
We simulate the route for one day under different methods, the expected hourly income and standard errors can be computed as:
\begin{equation*}
    \begin{cases}
    & \overline{E}(Hourly_{meth})=\frac{1}{q}\sum_{i=1}^q\frac{E_i(Daily_{meth})}{24}\\
    & \text{standard~error}=\sqrt{\frac{var\left(E_1(Daily_{meth})\cdots E_q(Daily_{meth})\right)}{q}}
    \end{cases}
\end{equation*}
where $\overline{E}(\cdot)$ is the average expectation function; $q$ is the number of path explorations with random initial states; $Daily_{meth}$ represents the total daily income for a driver following method $meth \in$  [REI, MPP, MNP, PCD, ATDSC].

\textbf{Expected weekly income.}
Based on the expected hourly income, we assume that a driver will work ten hours a day and can estimate the weekly income as:
\begin{equation*}
    \overline{E}(weekly_{meth})=\sum_{i=1}^7 10\cdot \overline{E}(Hourly_{meth}^i),
\end{equation*}
where $i=1,2,...,7$ represents the days of a week (from Monday to Sunday), $Hourly_{meth}^i$ is the hourly income; $\overline{E}(weekly_{ATDSC})$ and $\overline{E}(weekly_{basl})$ denote the average expected weekly income using ATDSC and the baseline methods \textit{basl} $\in$ [REI, MPP, MNP,
PCD], respectively.

To facilitate the comparison, we also report the improvement of our method over other baselines.
\begin{equation*}
   \text{Improvement}=ln \frac{\overline{E}(weekly_{ATDSC})-\overline{E}(weekly_{basl})}{\overline{E}(weekly_{basl})}.
\end{equation*}
\begin{figure}
\centering
   \subfigure[January 2020] {\includegraphics[width=0.3\linewidth]{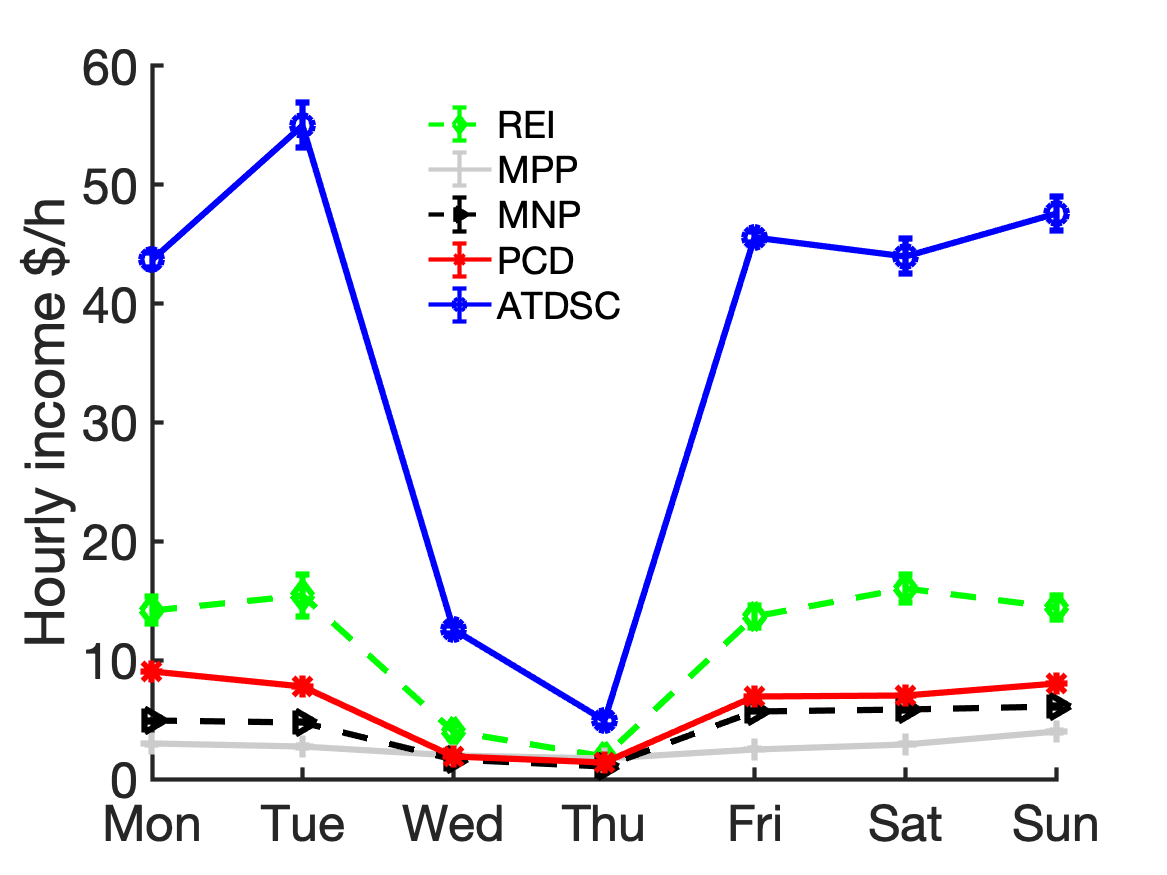}}
   \subfigure[February 2020]{\includegraphics[width=0.3\linewidth]{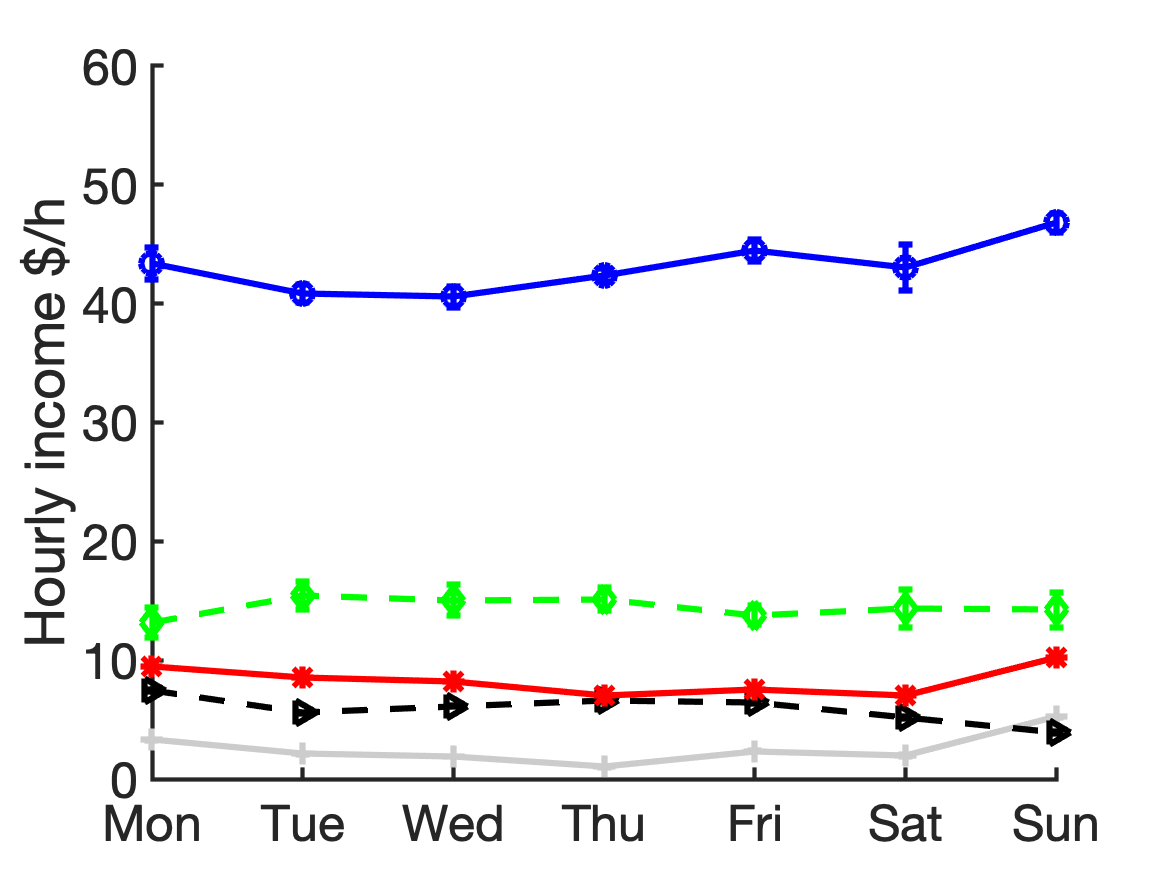}}
   \subfigure[March 2020]{\includegraphics[width=0.3\linewidth]{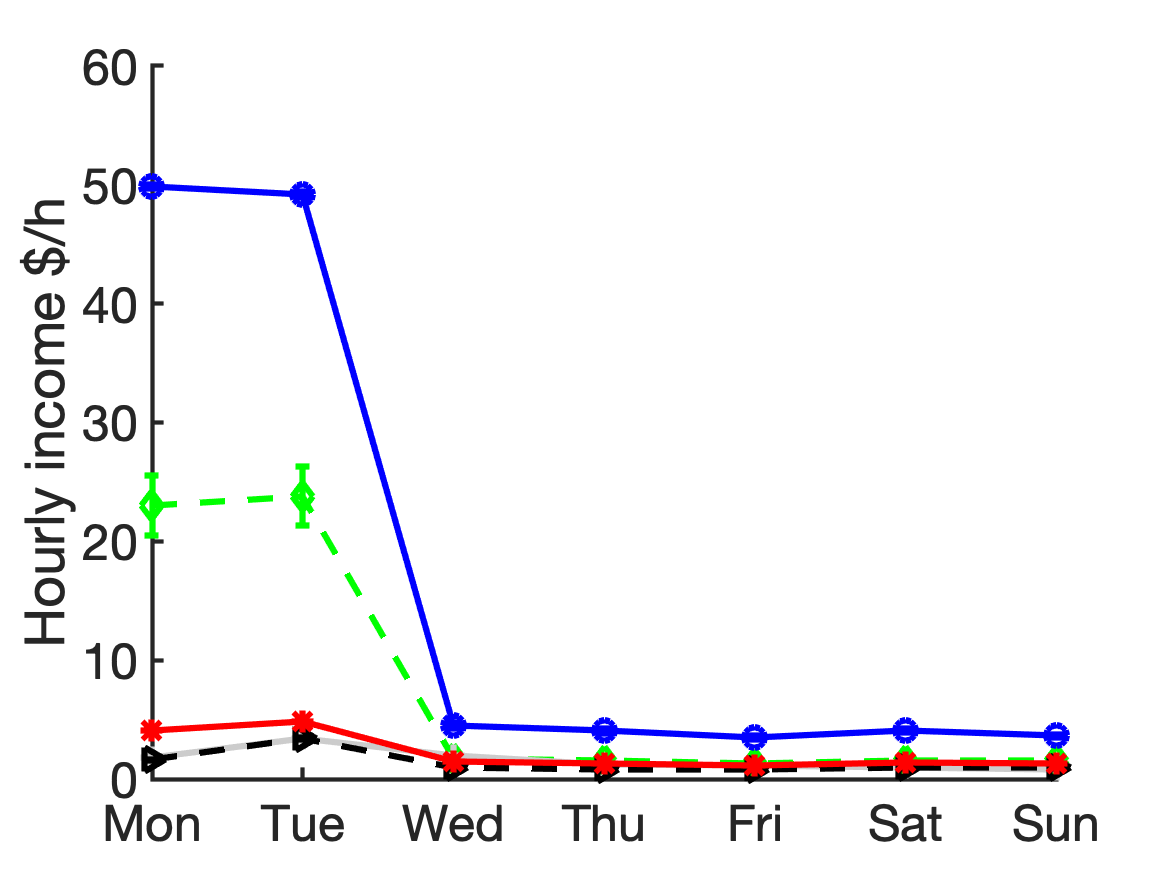}}\\
  \subfigure[April 2020]{\includegraphics[width=0.3\linewidth]{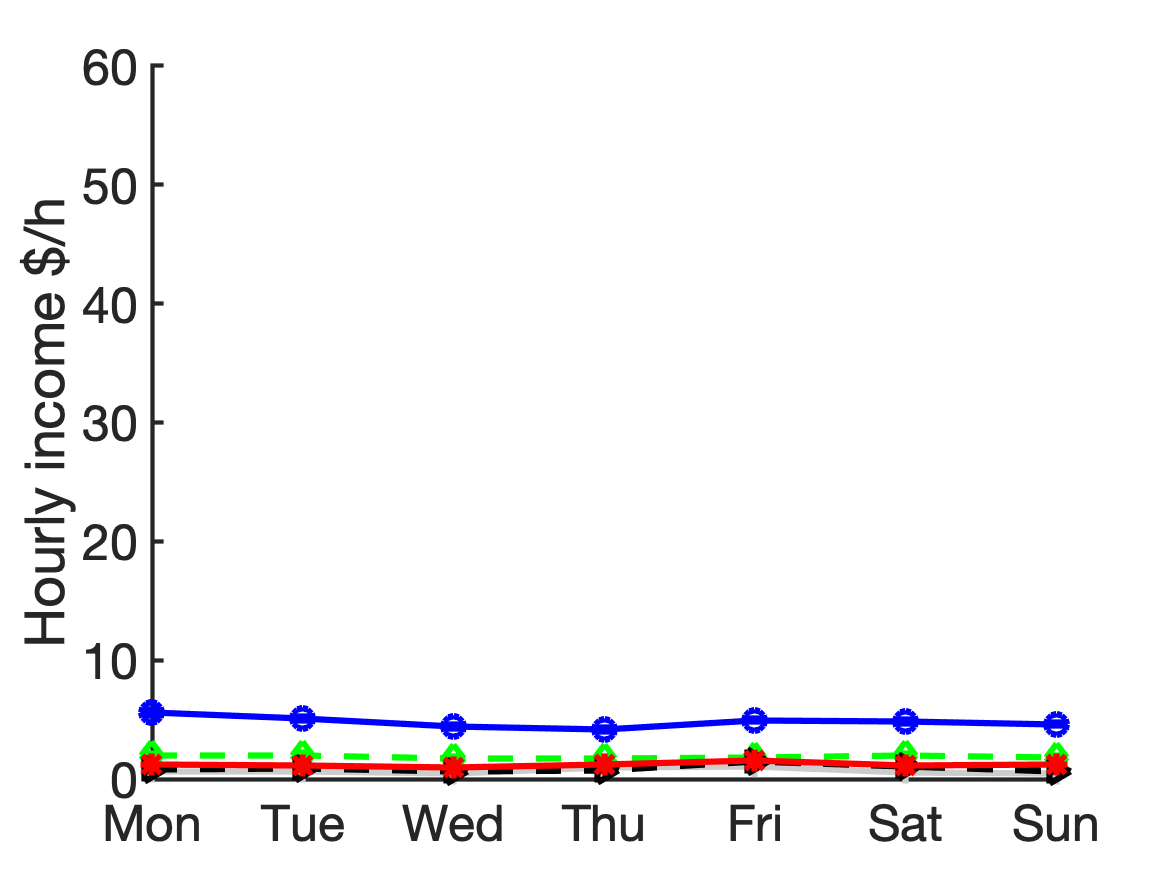}}
   \subfigure[May 2020]{\includegraphics[width=0.3\linewidth]{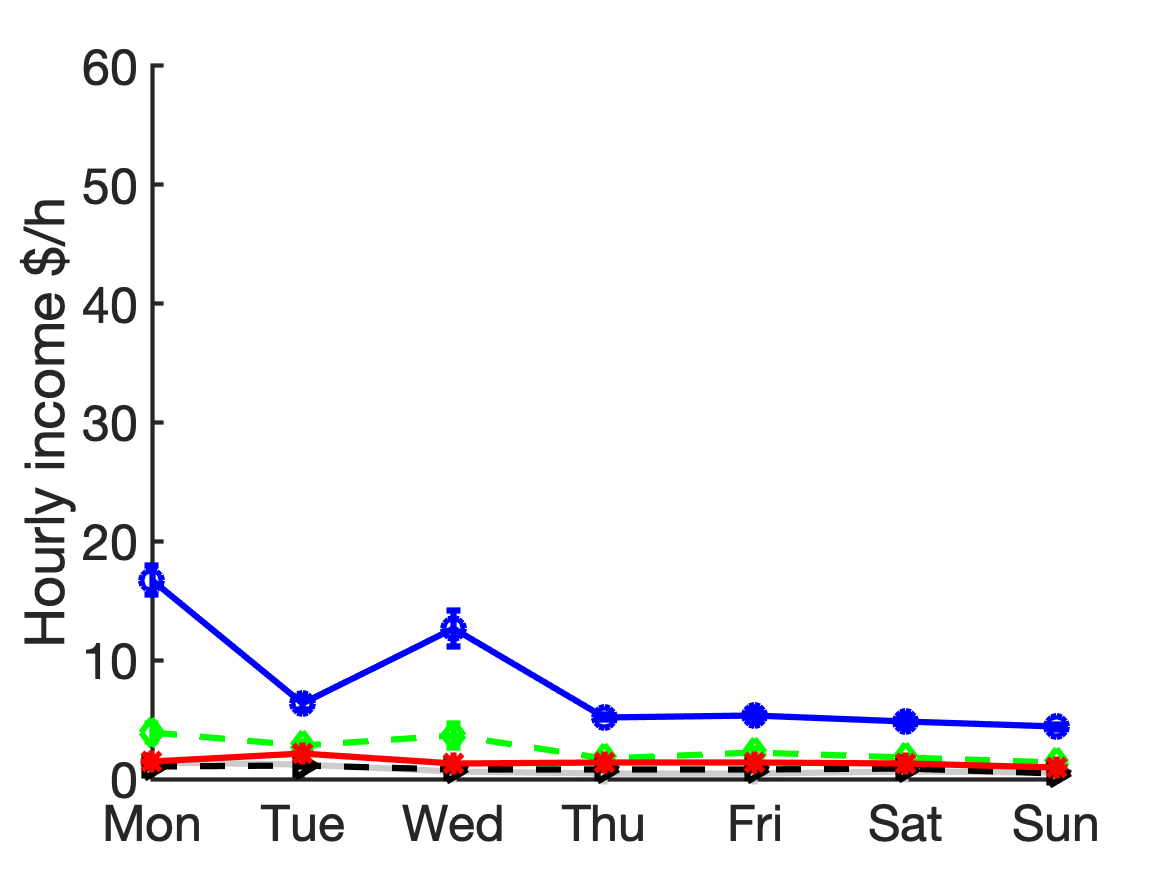}}
  \subfigure[June 2020]{\includegraphics[width=0.3\linewidth]{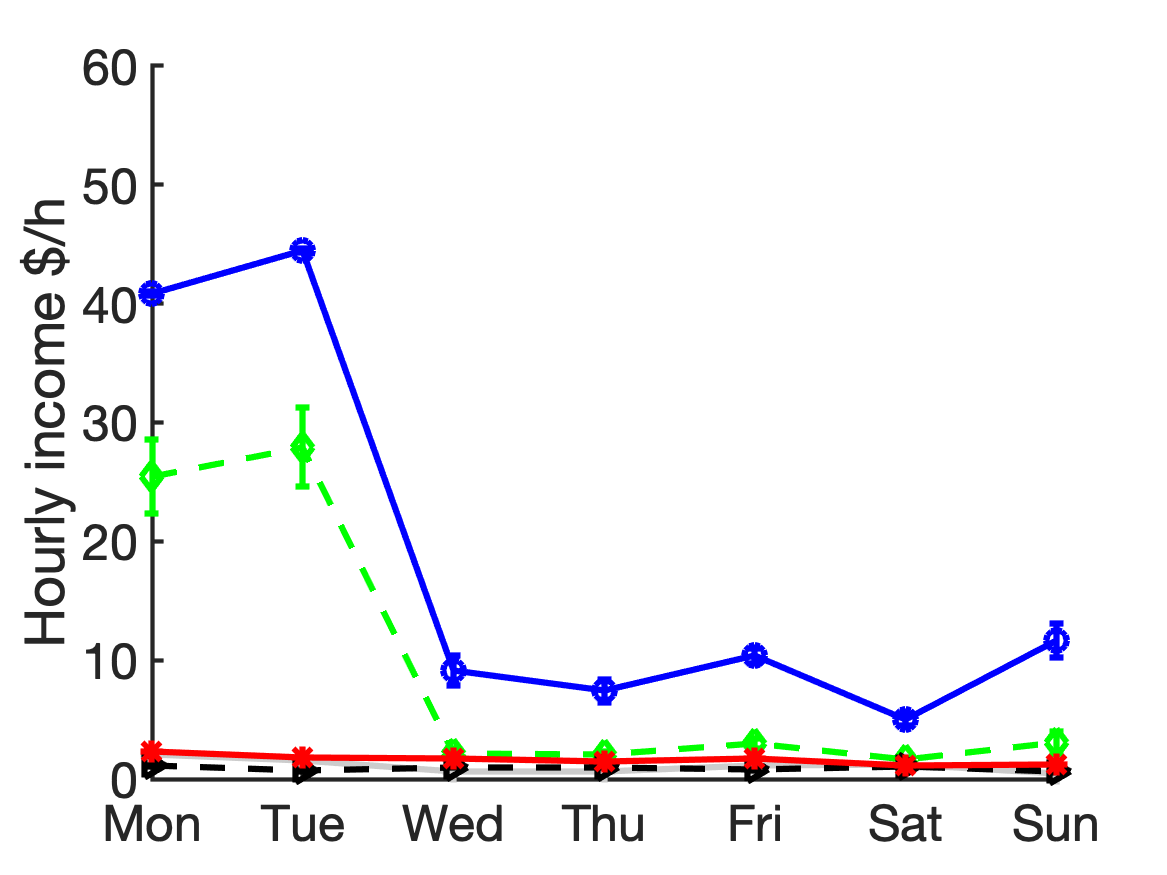}}\\
   \caption{Average Hourly Income.
   }
   \label{fig:hourlyincome}
\end{figure}

\subsection{Results}
To validate our method, we demonstrate the results in the views of average hourly income, weekly income, along with an hour by hour case study.

\subsubsection{Hourly income comparison.}
\autoref{fig:hourlyincome} illustrates the performance comparison of the average hourly income in different months.
Each reported point represents the average hourly income of 30 paths based on random initial states, and error bars represent the corresponding standard errors.
As can be seen, the hourly income was significantly reduced in April, May, and June, comparing to January and February.
This is what we expect given the COVID-19 outbreak in March. 

According to the results, Our method ATDSC shows consistent superiority over other benchmarks, for all weekdays and weekends, before and after the COVID-19 outbreak.
Although there is a significant shrink of income after March for all benchmarks, our method has resulted in a much higher hourly income. 
Especially, in March (when the COVID-19 had just started), our method has done a good job in maintaining the hourly income close to normal rate; and in June (after two months of COVID-19), our method brings the earnings back to the level of March, while all benchmarks remain in low performance.

Since the iterations for \textit{REI} are not large enough to guarantee convergence, it causes the unstable performance in April and May as it nearly achieves the same performance as other baselines.
The difference between our method and \textit{REI} is credited to the efficiency of our self-check mechanism.
\textit{MNP} aims to maximize the area profit with a greedy strategy. Its worse performance indicates that only concentrating on the short term profit will not lead to a long term benefit.
The results of the \textit{MPP} and \textit{PCD} indicate that maximizing the pick-up probability or reducing the cruising time may not guarantee the maximum earnings for drivers.

\begin{figure}
\centering
    \subfigure[January 2020] {\includegraphics[width=0.3\linewidth]{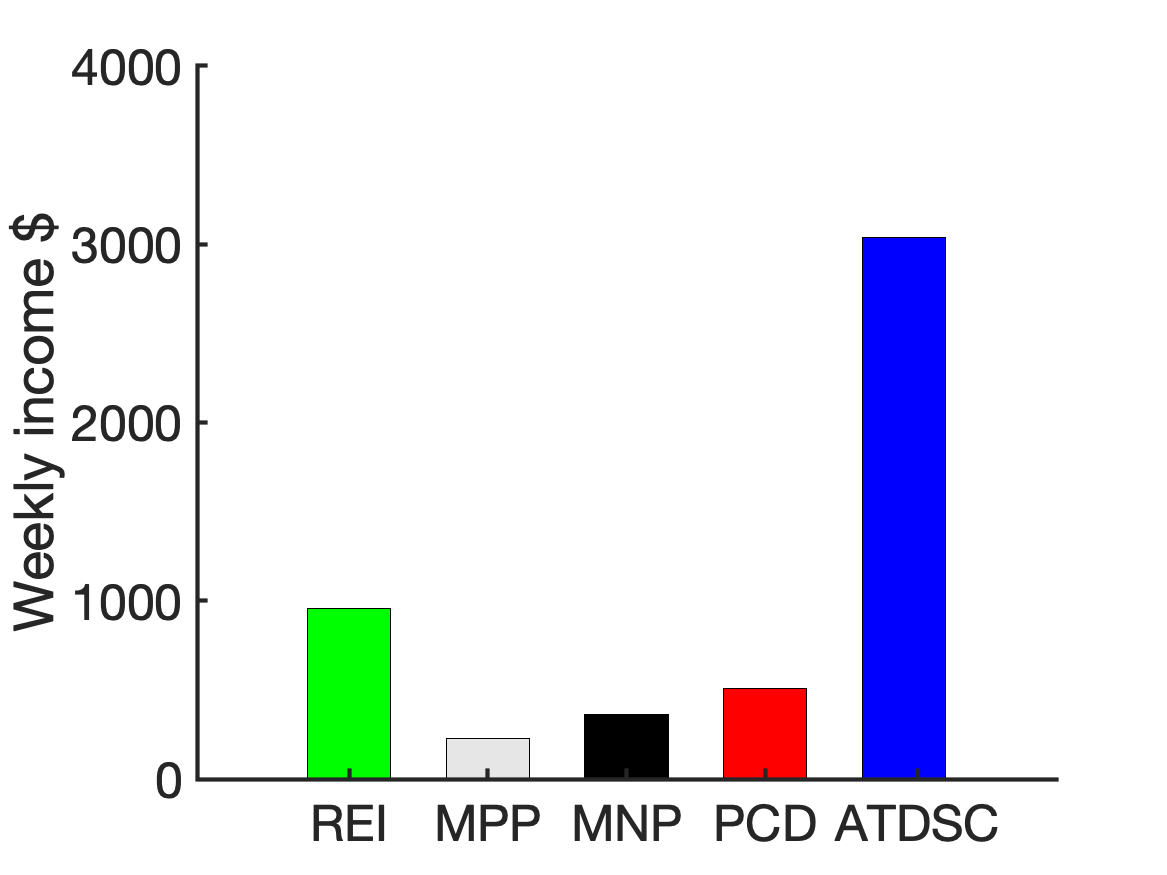}}
  \subfigure[February 2020]{\includegraphics[width=0.3\linewidth]{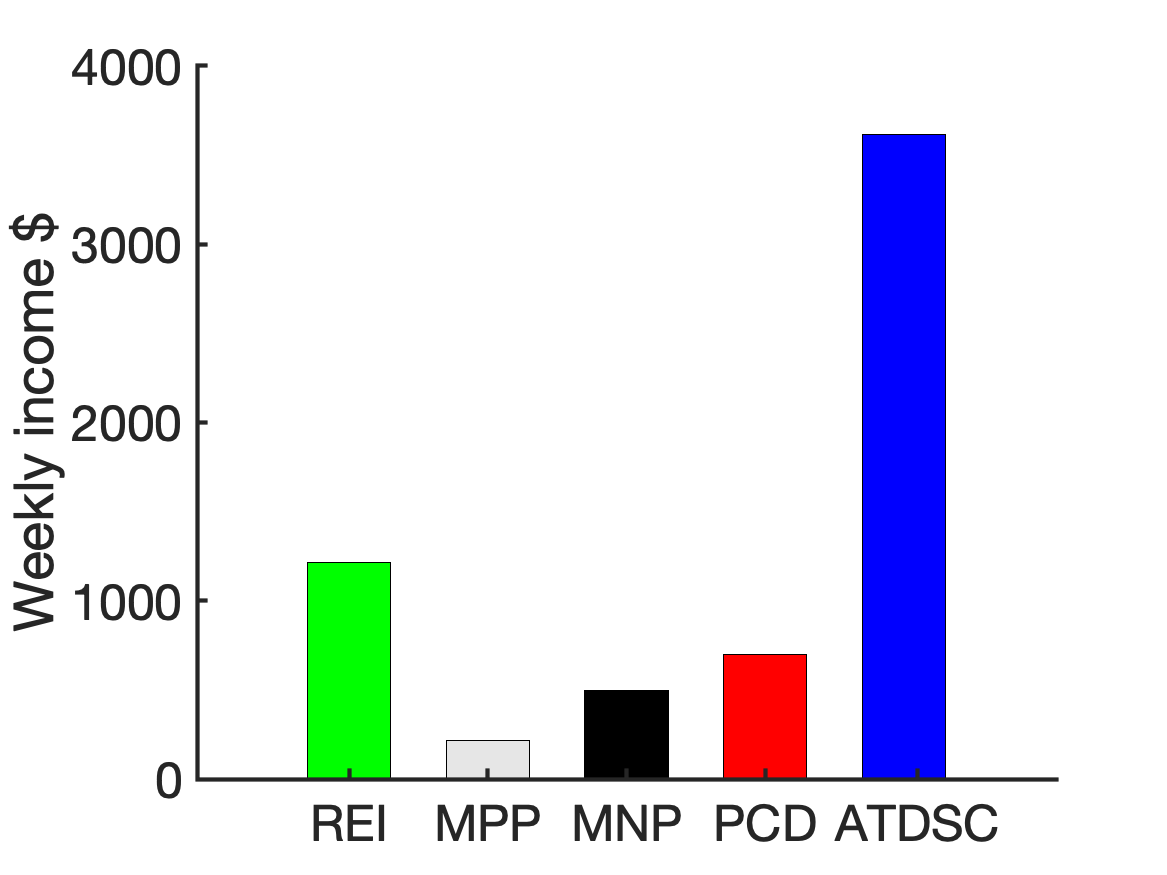}}
  \subfigure[March 2020]{\includegraphics[width=0.3\linewidth]{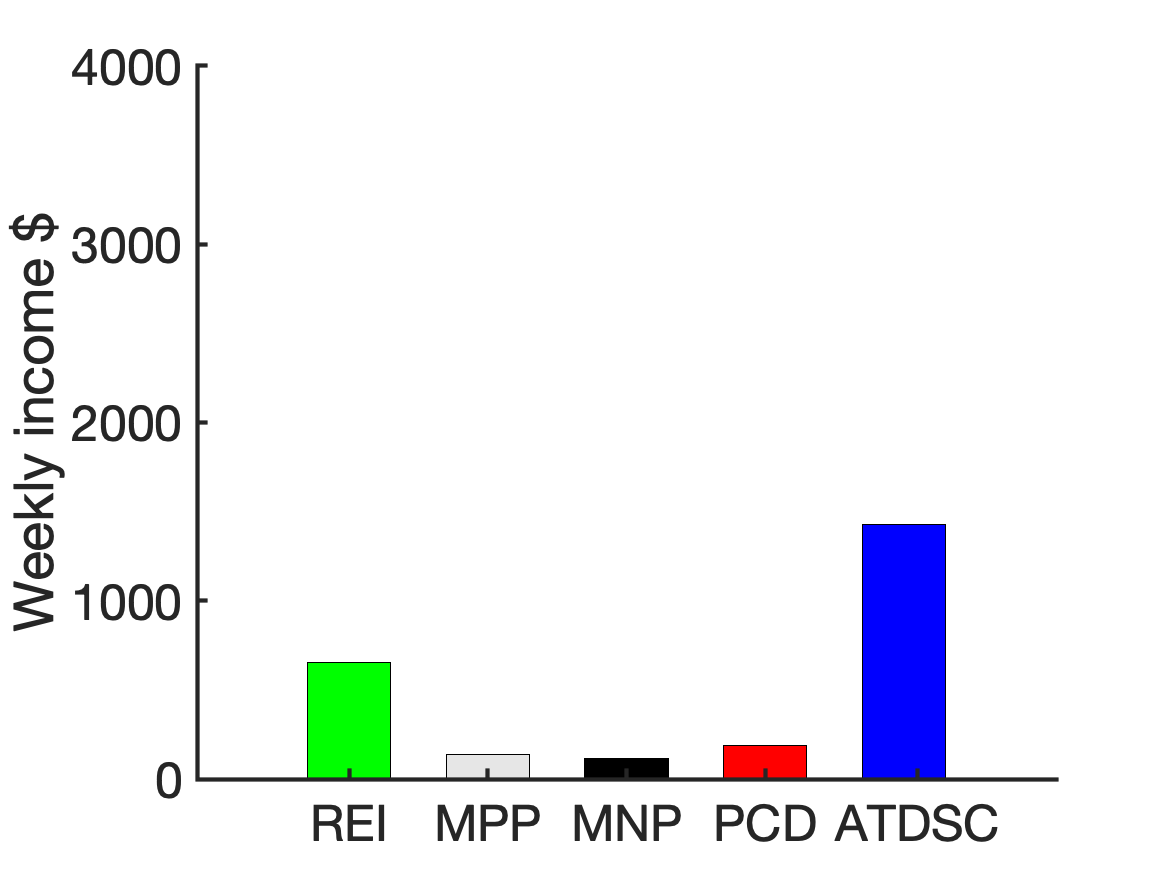}}\\
  \subfigure[April 2020]{\includegraphics[width=0.3\linewidth]{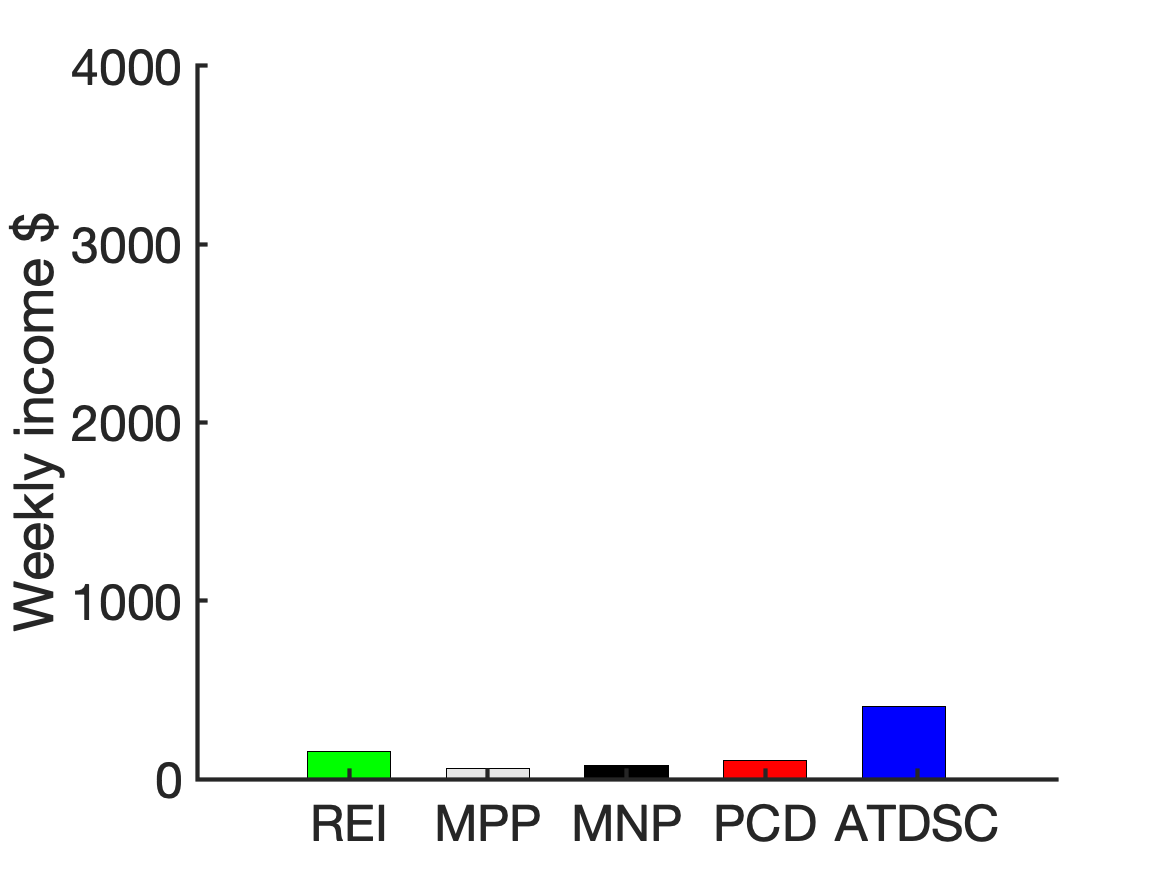}}
  \subfigure[May 2020]{\includegraphics[width=0.3\linewidth]{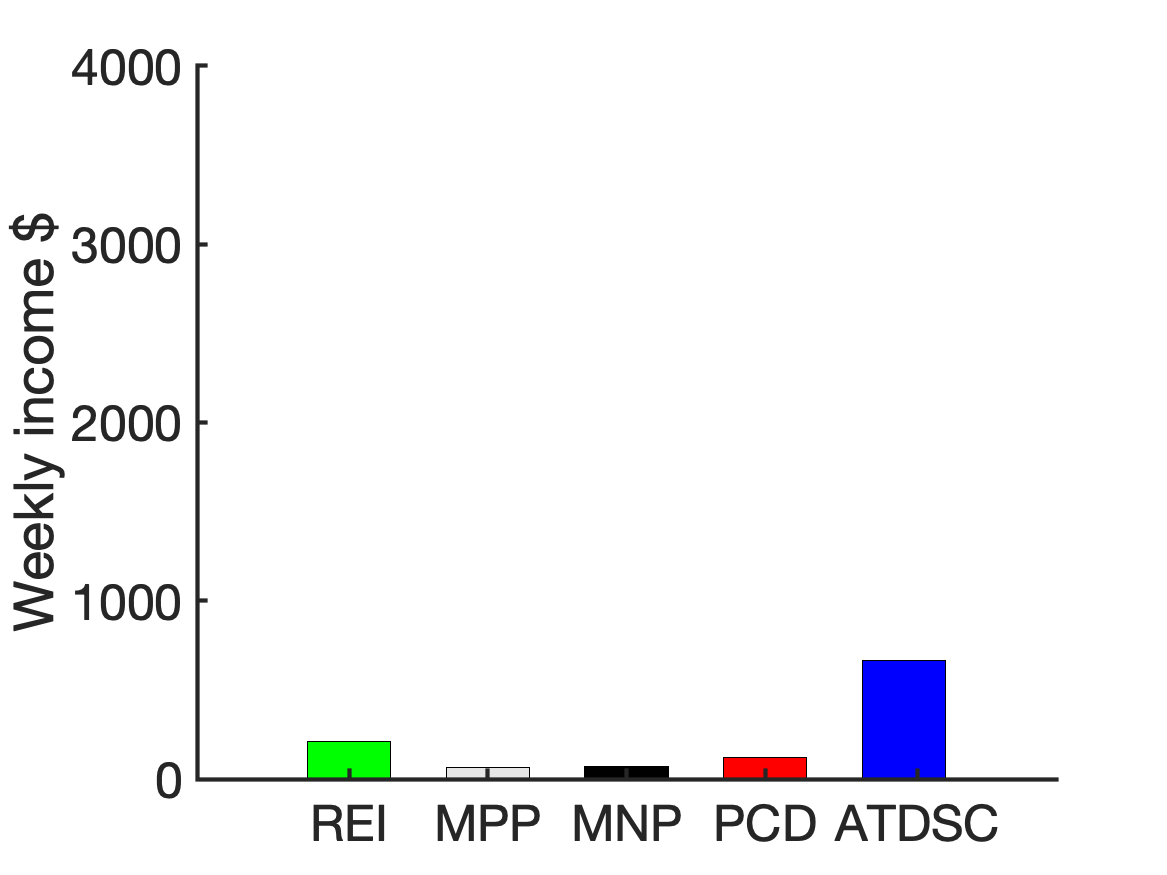}}
  \subfigure[June 2020]{\includegraphics[width=0.3\linewidth]{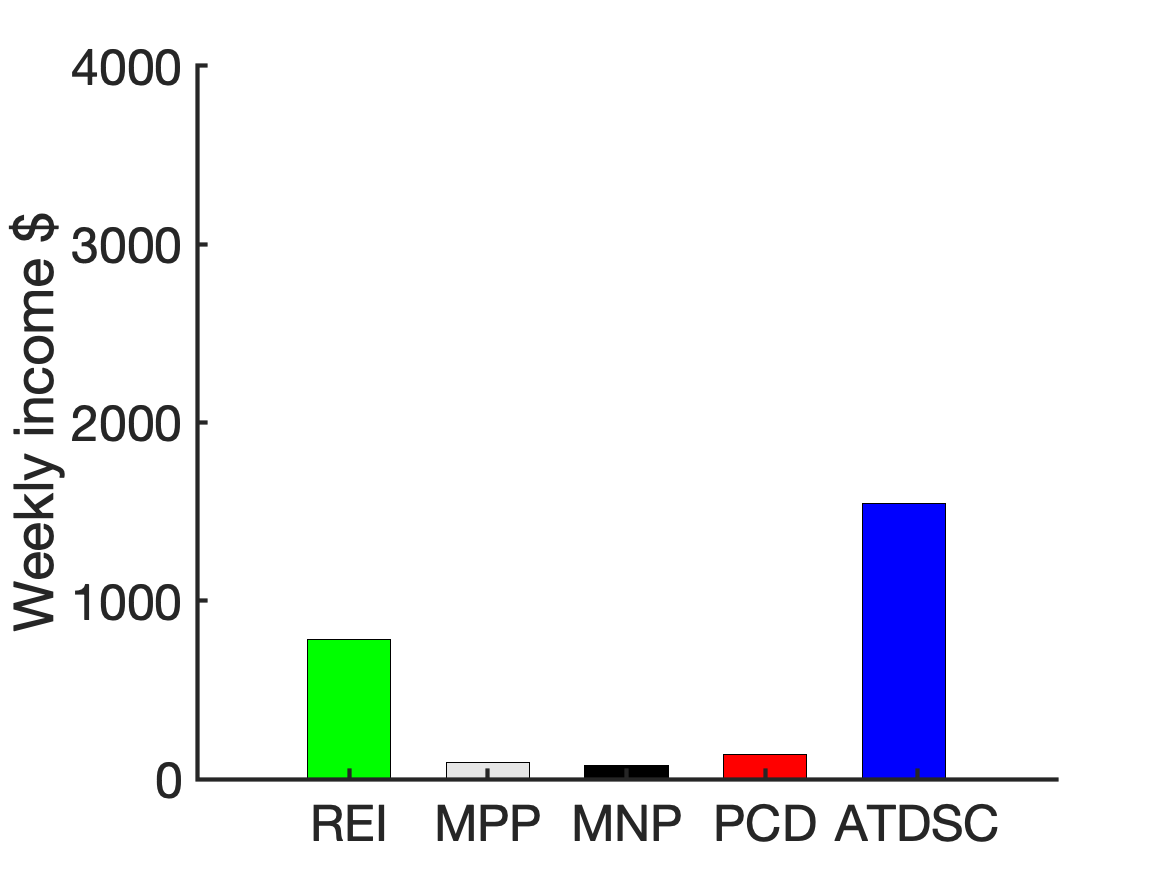}}
   \caption{Average Weekly Income.
   }\label{fig:weeklyimp}
\end{figure}
\begin{table}
\centering
\begingroup
\setlength{\tabcolsep}{6pt} 
\renewcommand{\arraystretch}{1} 
  \caption{Natural Logarithmic Improvement Over the Baselines.}
  \label{tab:imp}
\begin{tabular}{l|cccccc}
\hline
\diagbox{Method}{Month (2020)} & Jan & Feb & Mar & Apr & May &  Jun\\ 
\hline
REI & 0.78 & 0.68 & 0.17 & 0.46 & 0.77 & -0.02 \\
MPP & 2.52 & 2.77 & 2.25 & 1.77 & 2.22 & 2.76 \\
MNP & 2.01 & 1.84 & 2.45 & 1.49 & 2.15 & 2.98 \\
PCD & 1.61 & 1.43 & 1.9 & 1.09 & 1.52 & 2.33 \\
\hline
\end{tabular}
\endgroup
\end{table}

\subsubsection{Average Weekly Income}
Now, we report the average weekly income by assuming that a full-time driver works ten hours a day, seven days a week. 
Related results can be found in \autoref{fig:weeklyimp}.
While the same income reduction pattern can be observed after the COVID-19 outbreak, our method has a consistently better performance than all benchmarks in terms of profitability.
In June, when the estimated incomes are still low based on all benchmarks, our method achieves a similar profit level as in March.
This is partially because our method is capable of checking the abnormal areas dynamically to detect the situation change when people are returning to their normal life. 

The natural logarithmic improvements of our method over other methods are also reported in \autoref{tab:imp}, indicating a stable and consistently superior performance of our method. 
As can be seen that the minimum natural logarithmic improvement in the table is -0.02, indicating a minimum of 98\% improvement.

Note that taxi drivers usually face high costs when providing services. For example, taxi companies usually charge drivers about one-third of their overall gross fare income.
Also, if a passenger pays by credit card, the driver may be charged a minimum 2\% transaction fee.
Some taxi drivers have to pay auto insurance and maintenance by themselves based on their contracts as independent contractors \footnote{https://work.chron.com/much-fare-taxi-drivers-keep-22871.html}. 
Given all these costs, we estimate that a taxi driver's actual profit is approximately 50-60\% of the income. 

\autoref{fig:adaptive} compares the performance between fixed and dynamic failure rate settings, based on our method. 
The results confirm the effectiveness of our adaptive failure rate.
According to Proposition~\ref{th2}, the usage of the adaptive parameter will achieve the convergence faster than the fixed parameter case.
Now it is also supported by the experimental results.
Our method will automatically update the failure rate based on the number of detected normal areas.
The smaller the normal area number is, the smaller the failure rate will be.

\begin{figure}[!] 
\centering
{\includegraphics[width=0.42\linewidth]{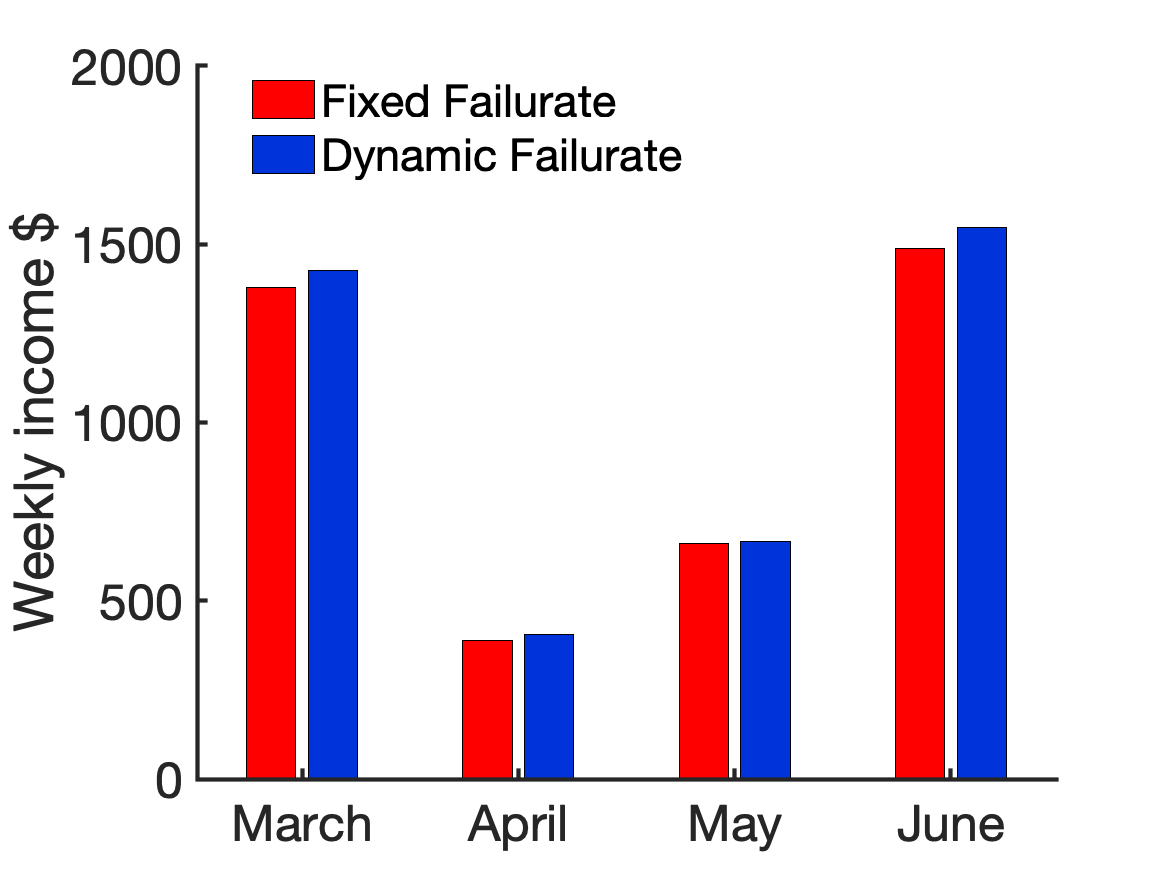}}
   \caption{Weekly Income Comparison (Fixed vs. Dynamic Failure Rate).
   }\label{fig:adaptive}
\end{figure}

\begin{figure}
\centering
   \subfigure[Monday in January 2020] {\includegraphics[width=0.3\linewidth]{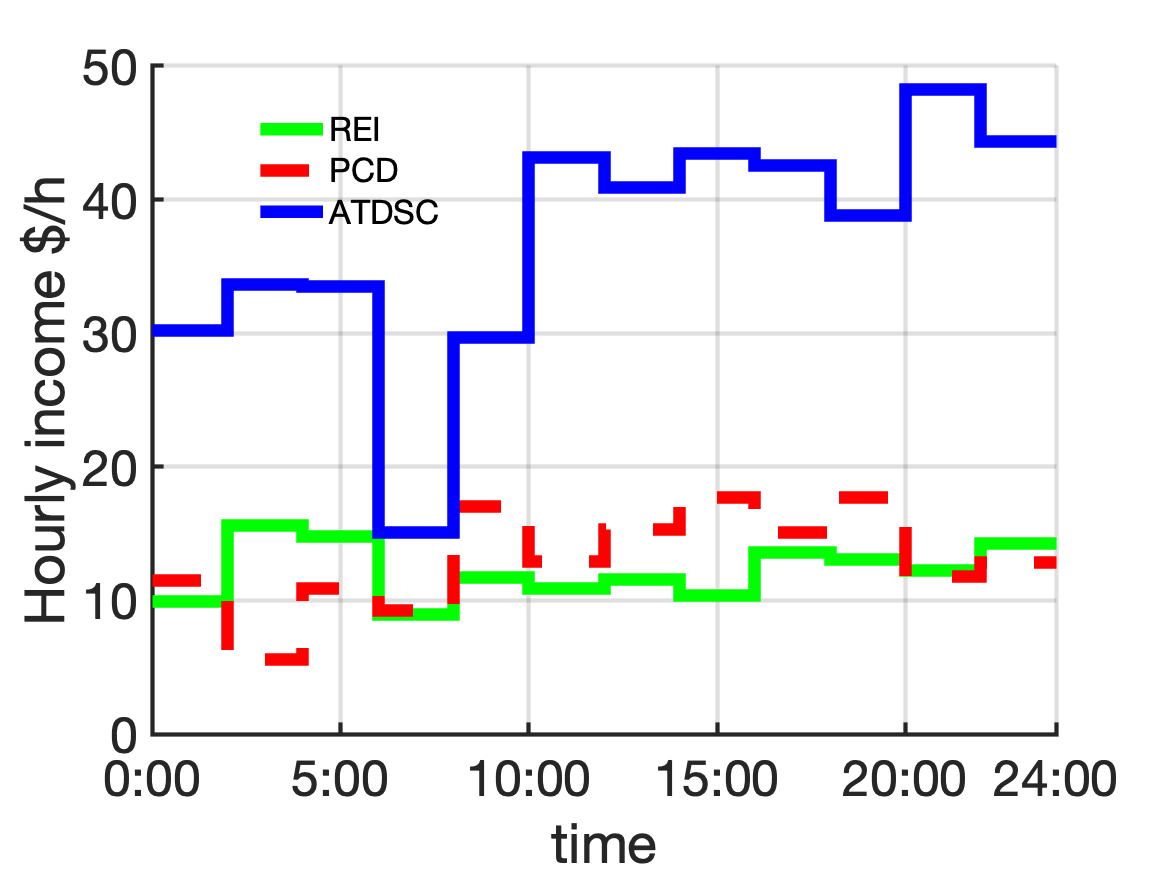}}
   \subfigure[Monday in February 2020]{\includegraphics[width=0.3\linewidth]{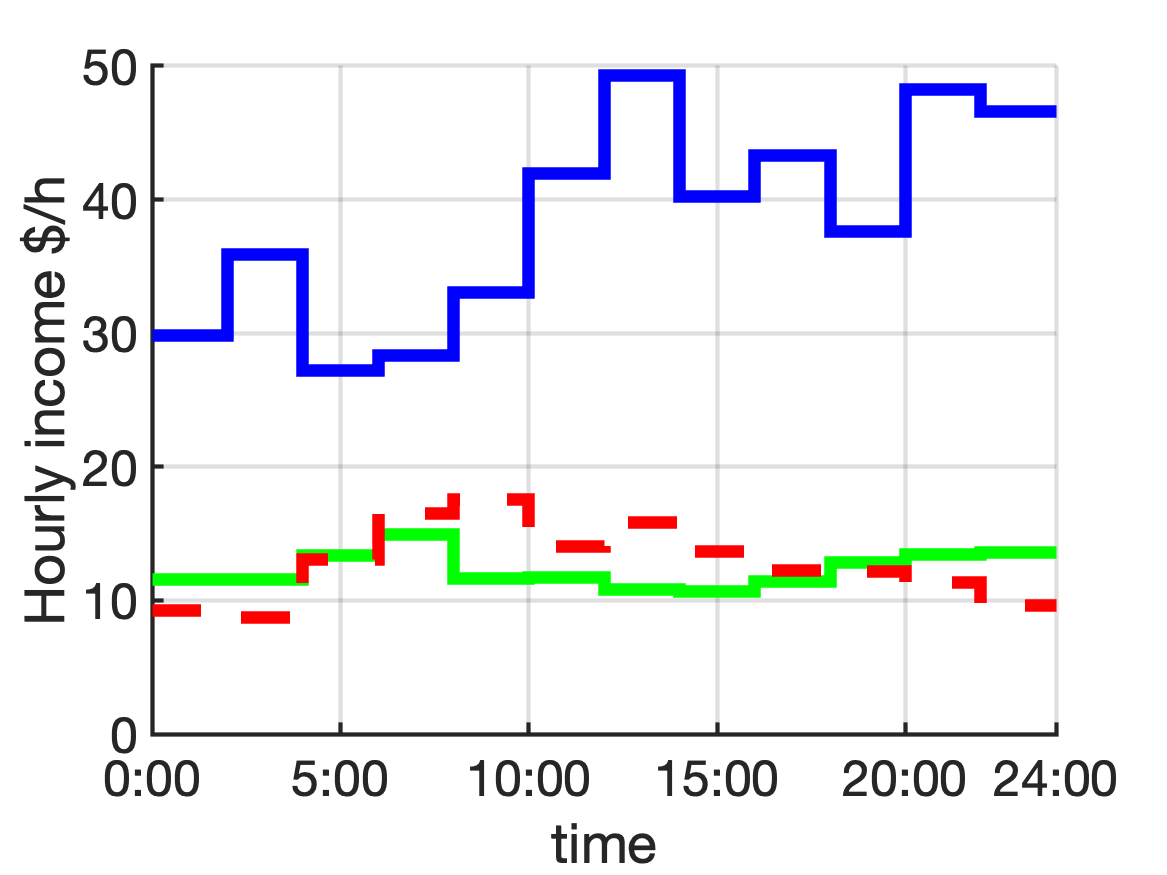}}
   \subfigure[Monday in March 2020]{\includegraphics[width=0.3\linewidth]{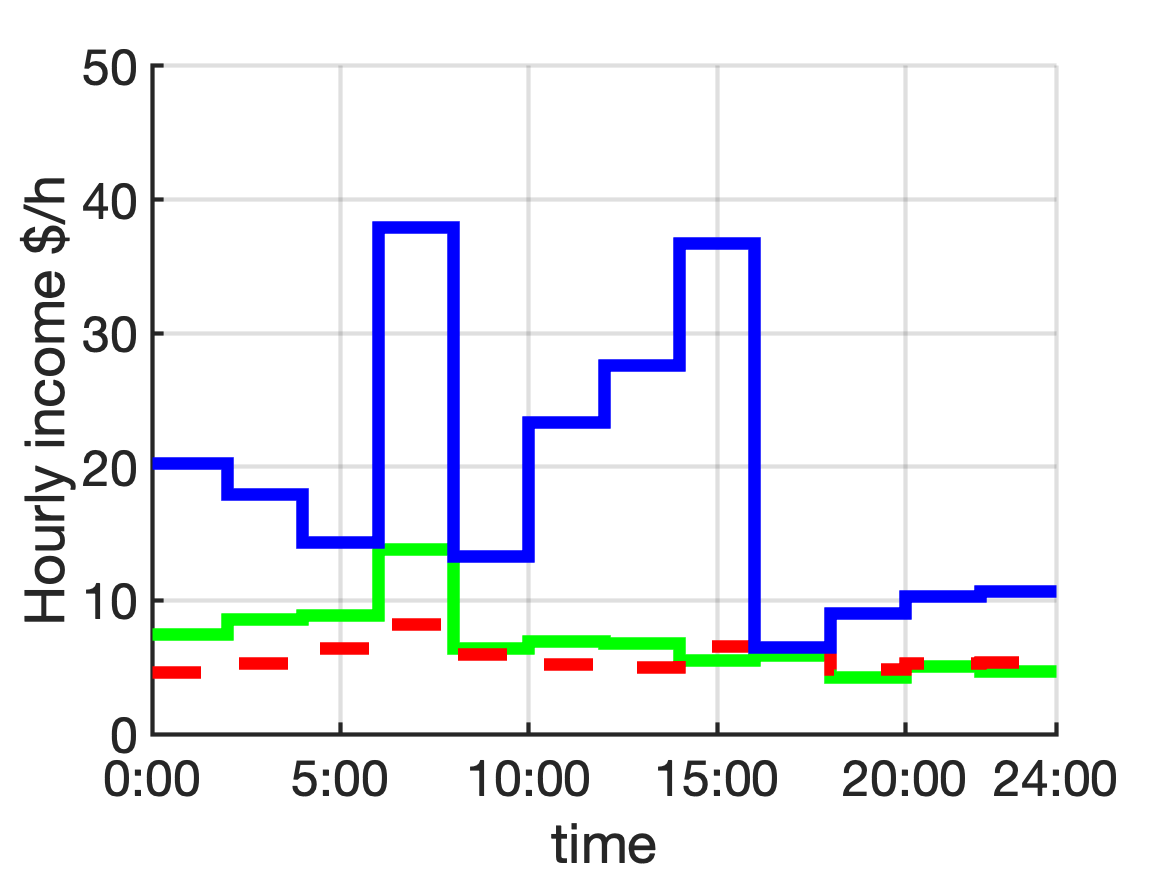}}\\
  \subfigure[Monday in April 2020]{\includegraphics[width=0.3\linewidth]{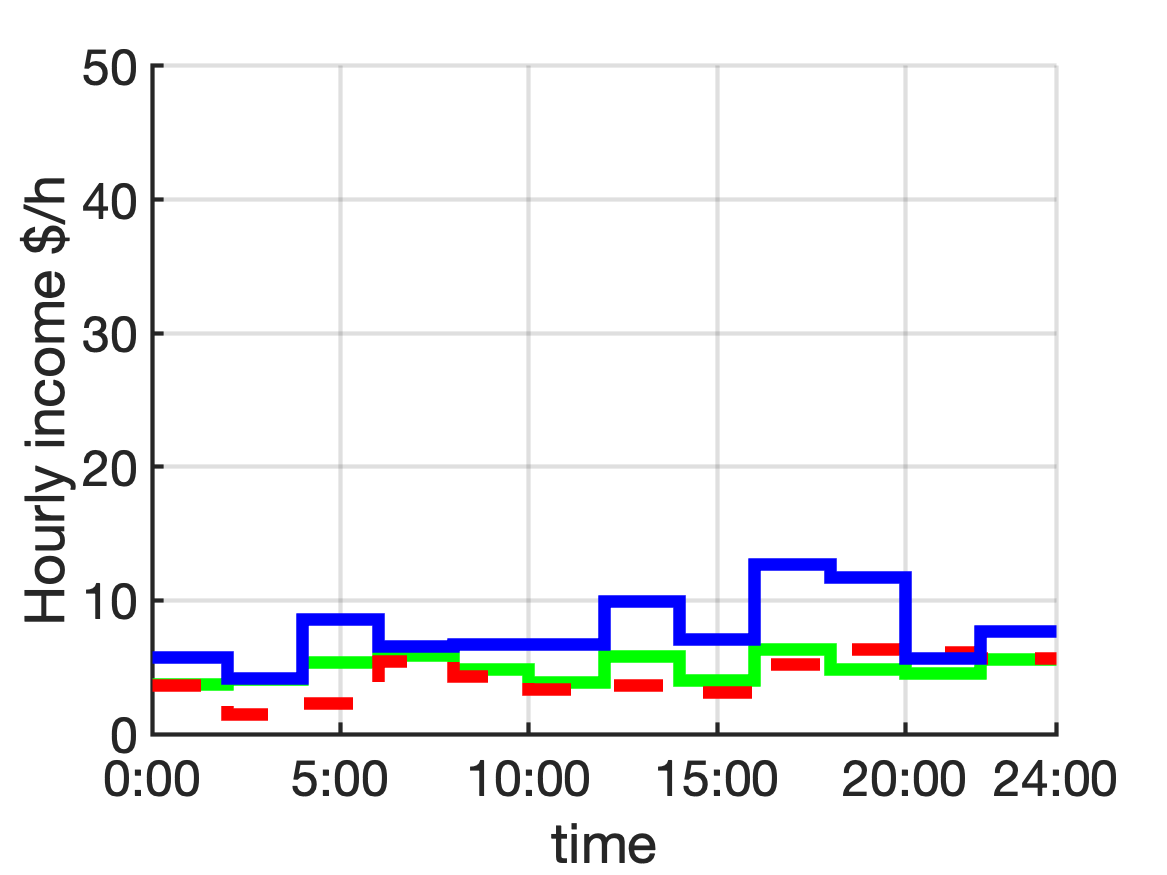}}
   \subfigure[Monday in May 2020]{\includegraphics[width=0.3\linewidth]{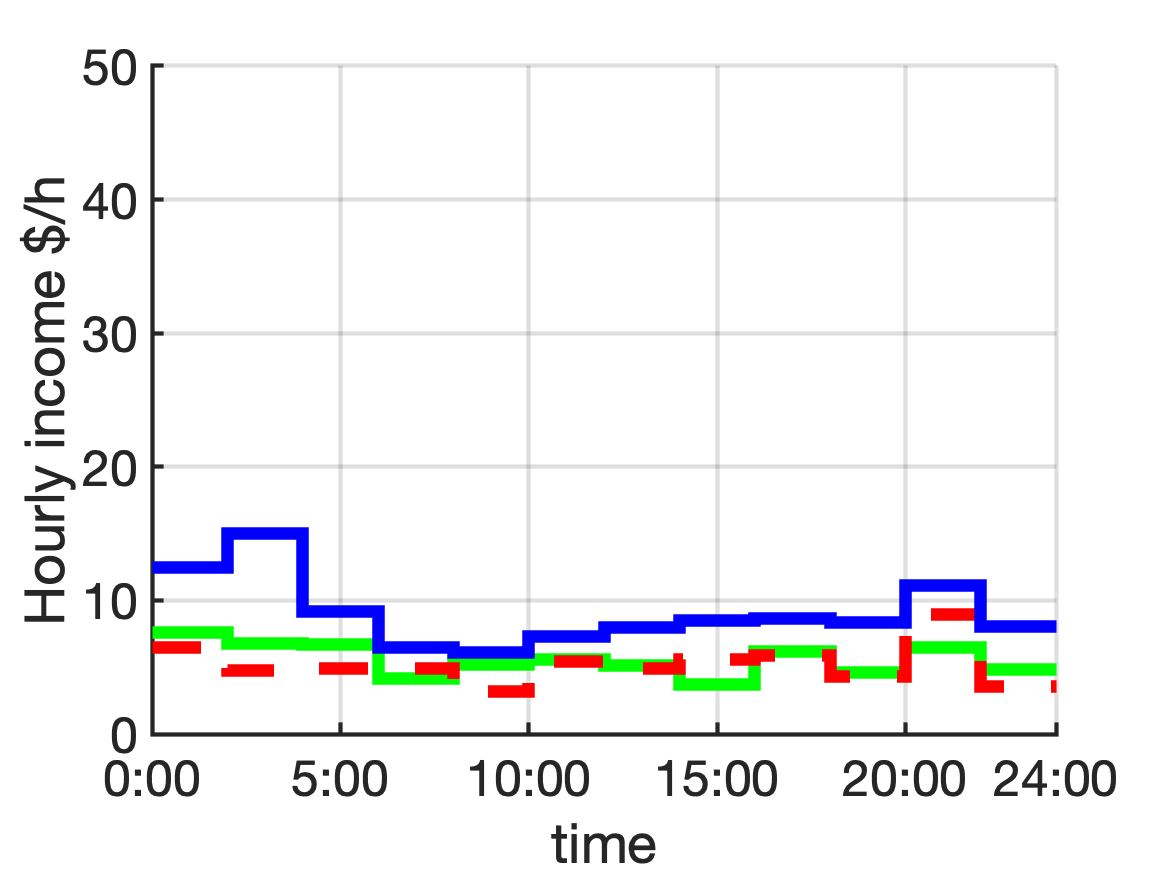}}
   \subfigure[Monday in June 2020]{\includegraphics[width=0.3\linewidth]{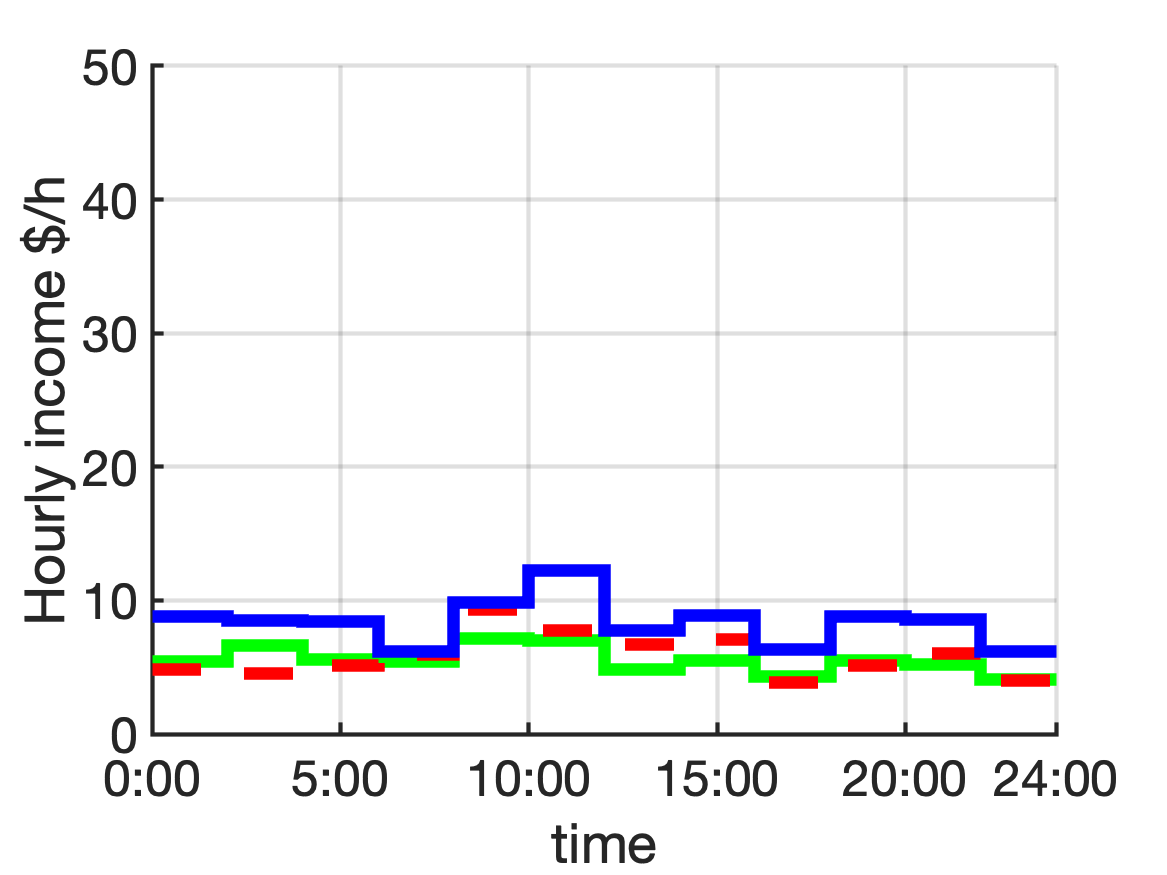}}\\
   \caption{Average Daily Income in Different Months.
   }\label{fig:dailyincome}
\end{figure}

\subsubsection{Hour-by-Hour Income.}
To show more detailed results, we plot the hour by hour income on Monday in \autoref{fig:dailyincome}. 
We plot the results of our method versus the top-two baseline methods based on the recommendation quality: REI and PCD.
For January and March, we find the high-profit hours in the morning from 4 AM to 8 AM; from afternoon to evening, high-profit hours are from 4 PM to 8 PM.
For the time period during COVID-19, the peak hour in the morning disappears, and the peak hours in the evening are also less profitable.
However, our method consistently outperforms the REI and PCD all the time, for virtually all hours and months.

\textbf{Occupancy Rate and Computing Speed}
To evaluate the flexibility of our model, we change the evaluation metric to occupancy rate.
Given the fixed working time for a taxi driver, the smaller the cruising time, the larger the occupancy rate. 
Thus, we define the occupancy rate function as: 

\begin{equation*}
    \frac{\text{delivery~time}}{\text{delivery~time} + \text{cruising~time}}.
\end{equation*}
In this case, the reward in our reinforcement learning model is set to the occupancy rate.
We calculated the average occupancy rate under 30 different routes in June 2020, and the iteration number for the reinforcement learning model is equal to 200,000.
\autoref{fig:case2} (a) presents the average occupancy rate for different baselines in June 2020.
Our method still outperforms all benchmarks, confirming its stability and consistency for different objectives.

We also compare the computing time of our method with the best-performance baseline, REI. 
As shown in \autoref{fig:case2} (b), the computing time of our method (ATDSC) is nearly the same as REI, indicating that the newly proposed self-check mechanism does not introduce a significant computational cost.
Considering the quality improvement (e.g., over 200\% improvement) obtained from our method, the associated new computational cost is small.
\begin{figure}
\centering
   \subfigure[Weekly occupancy rate in June 2020] {\includegraphics[width=0.42\linewidth]{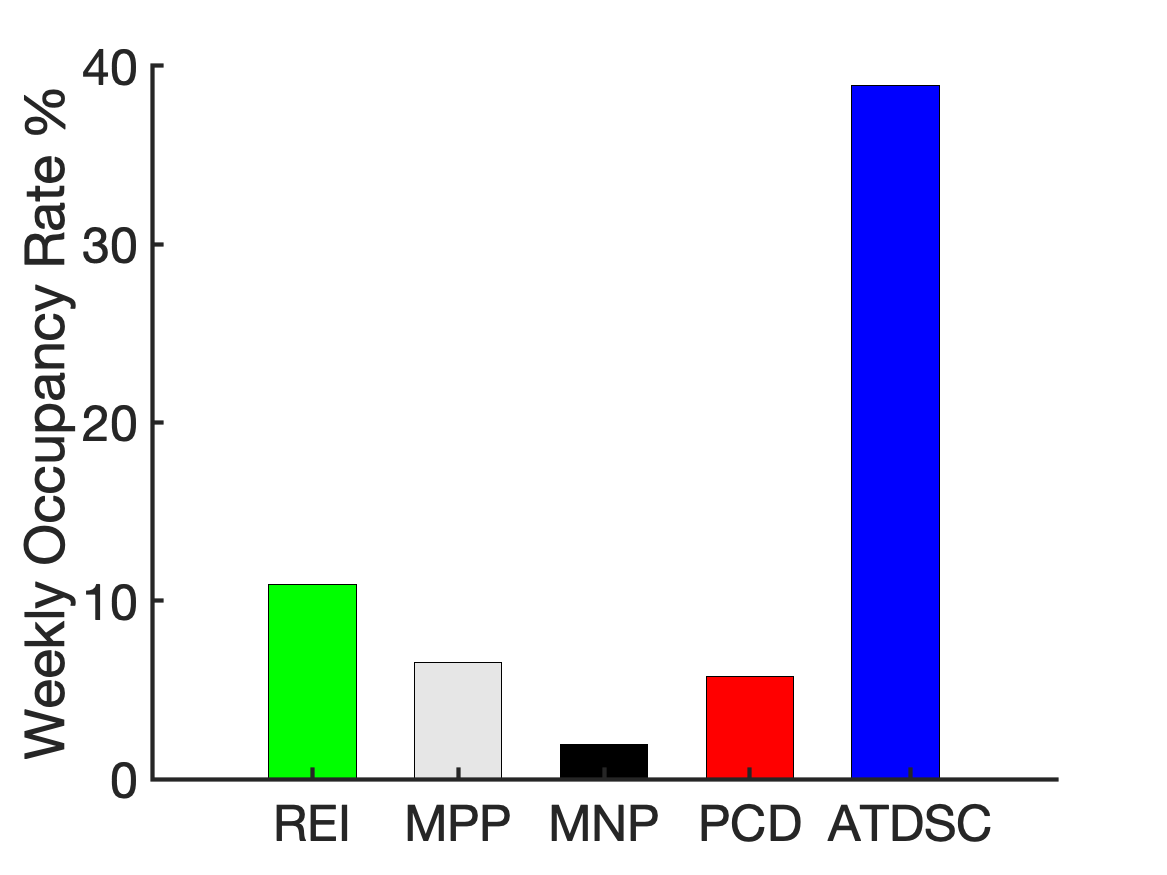}}
   \subfigure[Computing Time]{\includegraphics[width=0.42\linewidth]{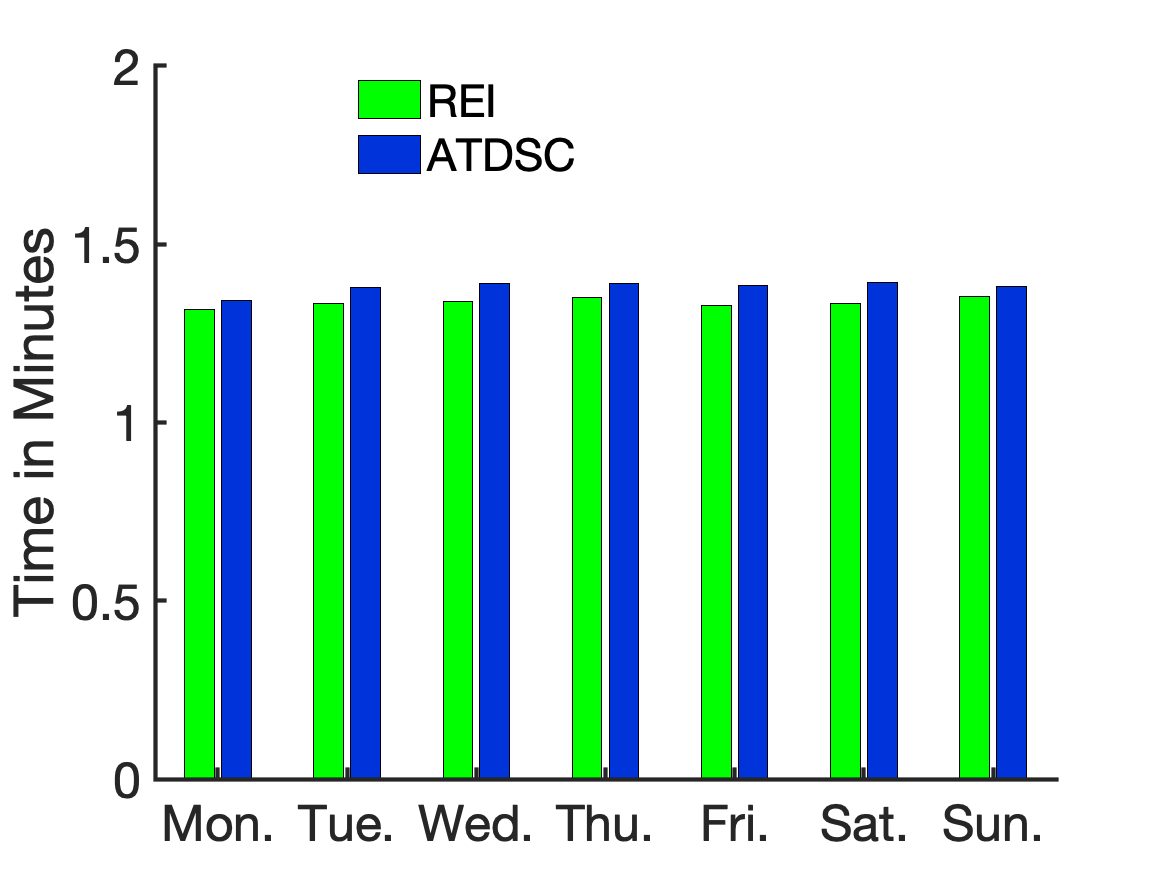}}
   \caption{Occupancy Rate and Computing Time.
   }\label{fig:case2}
\end{figure}

\section{Related work}\label{sec:related work}
Many works have been proposed to recommend a suitable path for taxi drivers based on big data.
The work can be categorized into two groups based on the objective function's flexibility: specific objective functions and flexible objective functions.

\subsection{Specific Objective Functions}
The specific objective function group denotes those papers focusing on optimizing specific goals for the taxi driver. 
To save energy, \citet{ge2010energy} proposes a novel function called the Potential Travel Distance (PTD) function.
Based on the proposed PTD function, they developed two efficient algorithms to find the optimal path.
By reducing the travel distance, they can help drivers save fuel too.
In terms of increasing the number of potential passengers, \citet{yuan2011find} combined the knowledge of passengers’ mobility patterns and taxi drivers’ pick-up behaviors.
Their method not only recommends the point with the highest pick-up probability to drivers, but also guide the passengers to the locations where they can easily find the vacant taxi.
\citet{ye2018multi} introduced a parallel simulated annealing method with domain knowledge to maximize the pick-up probability for several taxi drivers.
By communicating and shuffling the results after fixed steps, they can ensure that the recommended paths are always the global optimal without intersections. Thus their method can achieve a high speedup compared to the sequential method.

\citet{yuan2010t} explored the historical GPS trajectories of a large number of taxis and found the fastest route for the taxi driver to save time for passengers and drivers.
They designed a Variance-Entropy-Based Clustering approach to estimate travel time distribution and constructed the practically fastest route based on the estimation.
To maximize taxi drivers' profit, \citep{qu2014cost} designed a cost-effective recommender system for taxi drivers.
By evaluating the potential profit with a net profit objective function, they can use a greedy method to find the most profitable path for taxi drivers.
\citet{tang2013locating} analyzed large amounts of GPS location data of taxicabs and found a high-level profit-maximizing strategy for taxi drivers.
They treat the problem as a Markov Decision Process (MDP), and the parameters are determined by the historical data.
By applying dynamic programming, they captured meaningful rules on how to find the passenger.
For reducing the total cost in a trip, \citet{zhang2019parallel} developed a simulated annealing-based parallel method.
They spread the best result to each local worker among each communication and ensure that the global optimal can be achieved.
More human trajectory-based objective functions have been formulated in \citep{meng2019hierarchical,liu2016proactive,liu2014proactive, liu2012cocktail}.

\subsection{Flexible Objective Functions}
Compared with a specific objective function case, the flexible objective function focuses more on the model itself.
As long as the objective function is defined as required, the model can update and generate the required result.
\citet{verma2017augmenting} treated the right locations of passengers as a reward to guide the reinforcement learning model.
Based on the basic learning mechanism, they provide a dynamic abstraction mechanism to improve the performance.
The same setting as our method, \citet{gao2018optimize} designed a reinforcement learning-based method to optimize taxi driving strategies by maximizing taxi drivers' profit.
The reward was defined as the effective driving time for a driver within a day,
the results show that the method not only increases the income of a taxi driver, but also helps the passengers find taxi more easily.
\citet{ji2020spatio} designed an effective two-step method with reinforcement learning to deal with the dynamic route recommendation problem.
They define multiple types of rewards: taxi drivers' average earning, taxis' average vacant cruising time, passengers' average waiting time, and passengers being picked up within 30 minutes.


For the above methods, they optimize the objective function based on the definition of reward.
If we want to analyze the effect of other criteria, as long as we change the definition of reward (e.g. from profit to cruising time), we can directly obtain the required result without making any changes to the model.
Thus our proposed method in this work belongs to the flexible objective group.
Those studies with specified objective functions can also be transferred to the flexible objective function by defining the key criteria in their objective function as a variable.
However, the efficiency of the newly defined objective function still needs to be tested.

\section{Conclusion}\label{sec:conclusion and Future WOrk}
In this paper, we proposed an adaptive reinforcement learning method with a self-check mechanism to solve the dynamic route optimization problem. 
Our model can detect irregular events (e.g. public health emergence) and automatically update parameters to adapt to a new traffic environment.  
With a focus on income maximization, the results show that our method can increase at least 98\%
of the average weekly income for taxi drivers under several experimental settings.
We also provide case studies under different evaluation metrics to demonstrate the flexibility and stability of our method.
For future work, our method can be extended to parallel versions for further improvement in computational efficiency. 
It will also be interesting to consider vehicle-sharing scenarios for a more complicated user-based method to satisfy real needs.

\bibliographystyle{unsrtnat}
\bibliography{sample-base}

\appendix

\end{document}